\newcommandx{\unsure}[2][1=]{\todo[linecolor=red,backgroundcolor=red!25,bordercolor=red,#1]{#2}}
\newcommandx{\change}[2][1=]{\todo[linecolor=blue,backgroundcolor=blue!25,bordercolor=blue,#1]{#2}}
\newcommandx{\info}[2][1=]{\todo[linecolor=OliveGreen,backgroundcolor=OliveGreen!25,bordercolor=OliveGreen,#1]{#2}}
\newcommandx{\improvement}[2][1=]{\todo[linecolor=Plum,backgroundcolor=Plum!25,bordercolor=Plum,#1]{#2}}
\newcommandx{\thiswillnotshow}[2][1=]{\todo[disable,#1]{#2}}
\newtheorem{theorem}{Theorem}
\DeclareMathOperator{\Tr}{Tr}
\newtheorem{lemma}{Lemma}
\newtheorem{corollary}{Corollary}
\newcommand*{\mylen}{1.6in}
\newcommand*{\mylenlong}{2.8in}
\newcommand*{\mylenmedium}{1.8in}
\newcommand*{\figbarlen}{1.4in}
\newcommand*{\mylenshort}{1.2in}
\begin{document}

\title{A Fast Stochastic Contact Model for Planar Pushing and Grasping:
  Theory and Experimental Validation}
\author{Jiaji Zhou,  J. Andrew Bagnell and Matthew T. Mason  \\
 The Robotics Institute, Carnegie Mellon University \\
 \{jiajiz, dbagnell, matt.mason\}@cs.cmu.edu}

\maketitle

\begin{abstract}
Based on the convex force-motion polynomial model for quasi-static sliding, we derive the kinematic contact model to determine the contact modes and instantaneous object motion on a supporting surface given a position controlled
manipulator. The inherently stochastic object-to-surface friction
distribution is modelled by sampling physically consistent parameters from appropriate distributions, with only one parameter to control the amount of noise. 
Thanks to the high fidelity and smoothness of convex
polynomial models, the mechanics of patch contact is captured while
being computationally efficient without mode selection at support points.   
 The motion equations for both single and multiple frictional
contacts are given. Simulation based on the model is validated with robotic pushing
and grasping experiments. \footnote{Our open-source simulation
  software and data are available at: \scriptsize{\url{https://github.com/robinzhoucmu/Pushing}}} 
\end{abstract}

\section{Introduction}
Uncertainty from robot perception and motion inaccuracy is ubiquitous. Planning and
control without explicit reasoning about uncertainty can lead to undesirable
results. For example, grasp planning \cite{miller2003automatic, ferrari1992planning} is
often prone to uncertainy: the object moves while the fingers close
and ends up in a final relative pose that differs from planned. Consider the process of closing a
parallel jaw gripper shown in Fig. \ref{fig:grasp_failure}, the object will slide when the
first finger engages contact and pushes the object before the other
one touches the object. If the object does not end up slipping out, it
can be jammed at an undesired position or grasped at an unexpected
position. A high fidelity and easily identifiable model with minimum
adjustable parameters capturing all these possible outcomes would enable synthesis of robust manipulation strategy.

Although we can reduce uncertainty by carefully controlling the
robot's environment, as in most factory automation scenarios, such approach is both expensive
and inflexible. Effective robotic manipulation requires an understanding of the
underlying physical processes. Mason \cite{Mason1986a} explored using pushing as a
sensorless mechanical funnel to reduce uncertainty.  
Whitney \cite{Whitney1983b} analyzed the mechanics of wedging and jamming during
peg-in-hole insertion and designed the Remote Center Compliance device
that significantly increases the success of the operation under
motion uncertainty. 
With a well defined generalized damper model, Lozano-Perez et al.
\cite{Lozano-Perez1984a} and Erdmann \cite{Erdmann1986} developed
strategies to chain a sequence of operations, each with a certain funnel, to guarantee operation
success despite uncertainty. These successes stem from robustness analysis using simple physics models.
 
A large class of manipulation problems involve finite planar sliding
motion. In this paper, we propose a quasi-static kinematic contact
model for such a class. We model the inherent stochasticity in frictional sliding by
sampling the physics parameters from proper
distributions. We validate the model by comparing simulation with
large scale experimental data on robotic pushing and grasping
tasks. The model serves as good basis for both open loop planning and
feedback control. 

\begin{figure}[!t]
\centering
\begin{subfigure}[t]{\figbarlen}
\centering
\includegraphics[width=\figbarlen]{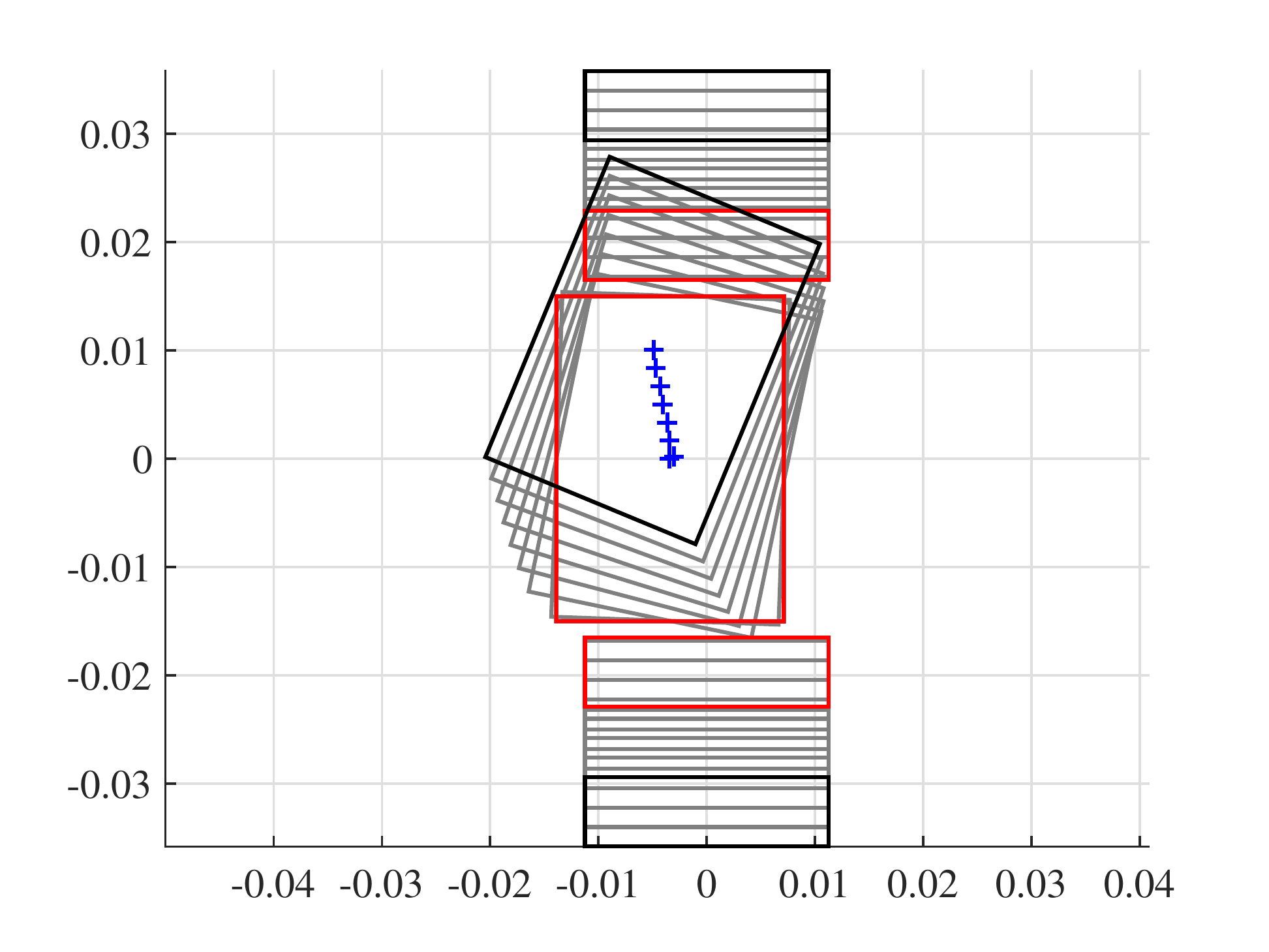} 
\caption{Grasped with offset.\\}
\label{fig:offset_grasp1}
\end{subfigure}
~
\begin{subfigure}[t]{\figbarlen}
\centering
\includegraphics[width=\figbarlen]{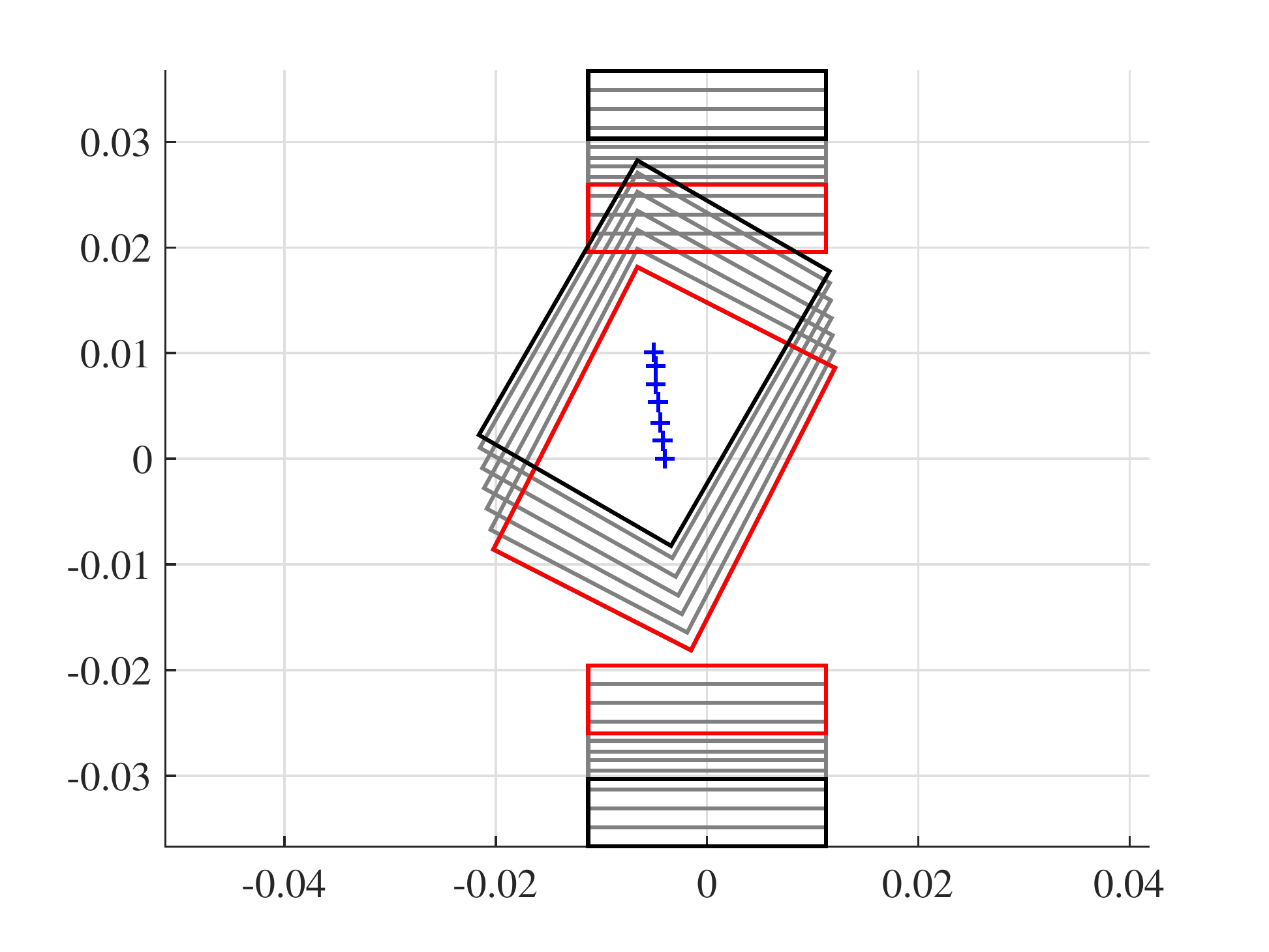}
\caption{Jamming.\\}
\label{fig:wedge_grasp}
\end{subfigure}
~
\begin{subfigure}[t]{\figbarlen}
\centering
\includegraphics[width=\figbarlen]{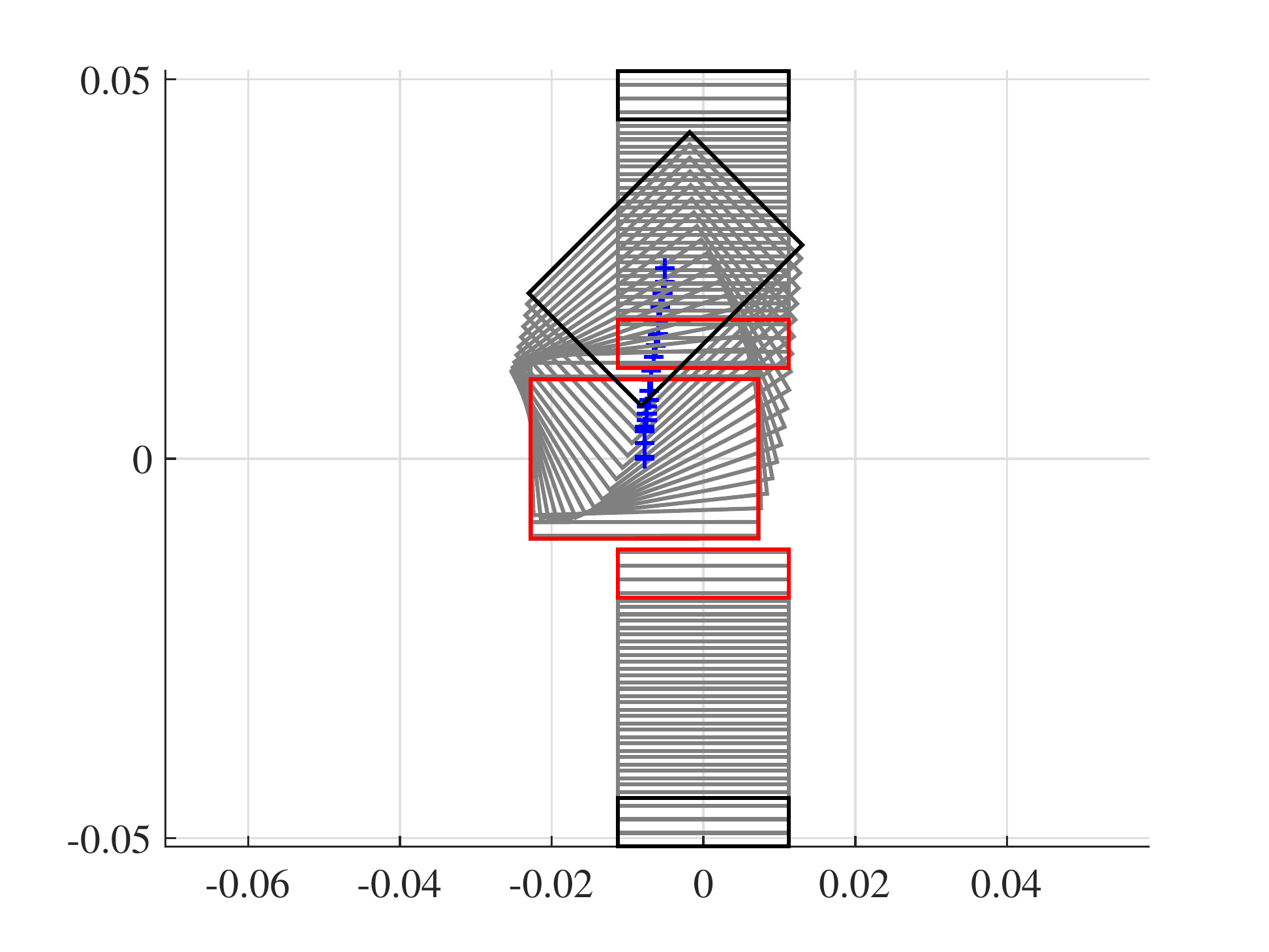} 
\caption{Grasped with offset.\\}
\label{fig:offset_grasp2}
\end{subfigure}
~
\begin{subfigure}[t]{\figbarlen}
\centering
\includegraphics[width=\figbarlen]{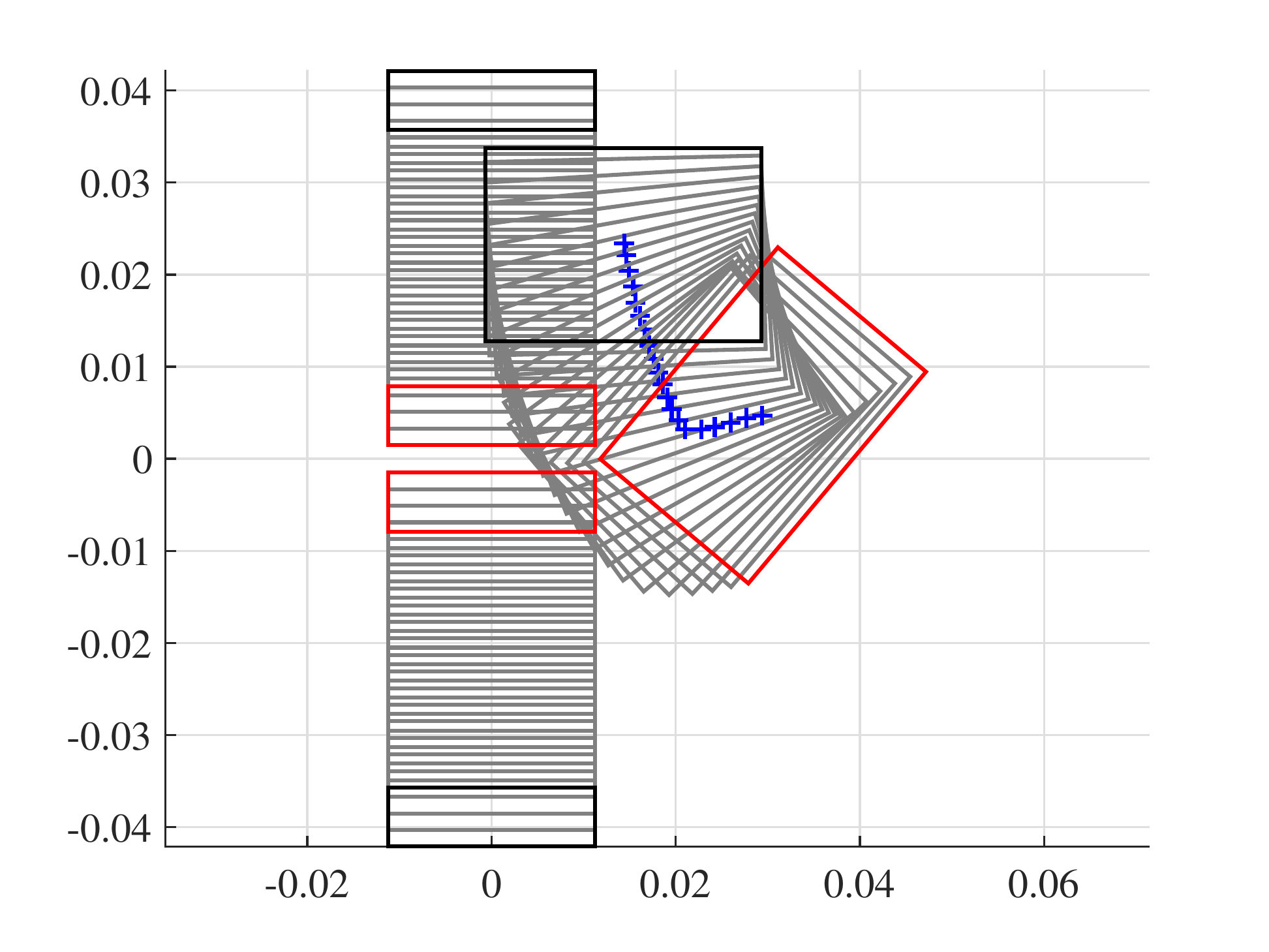}
\caption{Slipped to free space.}
\label{fig:slip_grasp}
\end{subfigure}
\caption{Simulation results using the proposed
  contact model illustrating the process of a parallel jaw gripper
  squeezing along the y axis when the object is placed at different
  initial poses. The initial, final and intermediate gripper
  configurations and object poses are
  in black, red and grey respectively. Blue plus signs trace out the
  center of mass trajectory of the object. }
\label{fig:grasp_failure}
\vspace{-0.1in}
\end{figure}

The proposed contact model is a direct extension of \cite{Zhou16}, 
which presents a dual mapping between an applied wrench and a resultant
object twist. In this paper, we map a given position controlled input (which
is common in most standard industrial manipulators) to the resultant
twist including the no-motion case for jamming and grasping. The applied
wrench is solved as an intermediate output without needing to control it. 
The rest of the paper is organized as follows:
\begin{itemize}
\item Section \ref{sec:related_work} describes the previous work.
\item Section \ref{sec:background} reviews the convex polynomial
  representation of the limit surface \cite{Zhou16} and the mechanics of pushing.
\item Section \ref{sec:single_contact} develops the contact model of
  unilateral frictional contact for both slipping and
  sticking. Section \ref{sec:multiple_contacts} develops the model for
  multiple frictional contacts.
\item Section \ref{sec:stochasic} develops sampling strategies of physically consistent model
  parameters that captures the inherent frictional stochasticity.
\item Section \ref{sec:expvalidations} demonstrates experimental
  evaluation of both pushing and grasping simulation using the
  proposed contact model.
\end{itemize} 
We assume quasi-static rigid body planar mechanics \cite{Mason1986} where
inertia forces and out-of-plane moments are negligible.

\section{Related Work} \label{sec:related_work}
The mechanics of pushing and grasping
involving finite object motion with frictional support was first studied in \cite{Mason1986a}.
A notable result is the voting theorem which dictates the sense of
rotation given a push action and the center of pressure regardless of
the uncertain pressure distribution. Brost \cite{brost1988automatic}
used this result to construct the operational space for planning squeezing
and push-grasping actions under uncertainty. However, many unrealistic
assumptions are made in order to reduce the state space and create
finite discrete transitions, including infinitely long fingers
approaching the object from infinitely far away. Additionally, how far
to push the object in the push-grasp action is not addressed.    
Peshkin and Sanderson \cite{Peshkin1988a} provided an analysis on the slowest speed of
rotation given a single point push. They \cite{Peshkin1988} used this result to design
fences for parts feeding. 
Lynch and Mason \cite{Lynch1996e} derived conditions for stable edge
pushing such that the object will remain attached to the pusher
without slipping or breaking contact. 
All of these results do not assume knowledge of the pressure
distribution except the location of center of pressure. They can be
classified as worse case guarantees without looking into the details of sliding
motion. Despite being agnostic to pressure distribution, these methods tend to be
overly conservative. 

Another line of research is to identify the necessary physical
parameters. Common approaches \cite{Lynch1993, Yoshikawa1991} discretize the contact patch into
grids and optimize for approximate criteria of force
balancing. These methods naturally suffer from the downside of
coarse discrete approximation of distributions or curse of
dimensionality if fine discretization is adopted. 
Additionally, the instantaneous center of rotation of the object can
coincide with one of the support points, rendering the kinematic
solution computationally hard due to combinatorial sliding/sticking mode assignment for each support point.   

 Goyal et al. \cite{Goyal1991} noted that all the possible static and sliding frictional
wrenches, regardless of the pressure distribution, form a convex set whose
boundary is called as limit surface. 
The limit surface can be constructed from Minkowsky sum
of frictional limit curves at individual support points without a convenient explicit form. Howe and Cutkosky \cite{howe1996practical} presented an experimental
method to identify an ellipsoidal approximation given known pressure. 
Dogar and Srinivasa \cite{Dogar2010pgd} used the ellipsoidal
approximation and integrated with motion planners to plan push-grasp
actions for dexterous hands.
Closely related to our work, Lynch et al. \cite{lynch1992manipulation} derived the
kinematics of single point pushing with centered and axis aligned
ellipsoid approximation. 
Zhou et al. \cite{Zhou16} proposed a framework of representing
planar sliding force-motion models using homogeneous even-degree
sos-convex polynomials, which can be identified by solving a semi-definite programming. The set of
applied friction wrenches is the 1-sublevel set of a convex polynomial whose
gradient directions correspond to incurred sliding body twist.
In this paper, we extend the convex polynomial model to associate a
commanded rigid position-controlled end effector motion to the instantaneous resultant
object motion. 
We show that single contact with convex quadratic limit
surface model has a unique analytical linear solution which extends
\cite{lynch1992manipulation}. The case for a high order convex polynomial model is reduced to solving a sequence of such subproblems. 
For multiple contacts (e.g., pushing with multiple points or grasping) we need to add linear
complementarity constraints \cite{stewart1996implicit} at the pusher
points, and the entire problem is a standard linear complementarity problem (LCP).

\section{Notations and Background}
 \label{sec:background}
 We first introduce the following notations:
\begin{itemize}
\item $O$: the object center of mass used as the origin of the
  body frame. We assume vector quantities are with respect to body
  frame unless specially noted.
\item $R$: the region between the object and the supporting
  surface.
\item $\mathbf{f_s}(\mathbf{r})$: the friction force distribution
  function that maps a point $\mathbf{r}$ in the contact area $R$ to its friction
  force the object applies on the supporting surface. For isotropic point Coulomb friction law, when the velocity
  at $\mathbf{r}$ is nonzero,  $\mathbf{f_s}(\mathbf{r})$ is in the
    same direction of the velocity. Its magnitude equals the
    pressure force multiplied by the coefficient of friction between the object and the supporting surface. When the object is static, $\mathbf{f_s}(\mathbf{r})$ is
  indeterminate and dependent on the externally applied force by the manipulator.
\item $\mathbf{V} = [V_x; V_y; \omega]$: the body twist (generalized
  velocity). 
\item $\mathbf{F} = [F_x; F_y; \tau]$: the applied body wrench by the manipulator that quasi-statically balances the friction wrench from the surface.
\item $\mathbf{p}_i$: each contact point between the manipulator end effector and
  object in the body frame.
\item $\mathbf{v}_{p_i}$: applied velocities by the manipulator end effector at
  each contact point in the body frame.  
\item $\mathbf{n}_{p_i}$: the inward normal at contact point
  $\mathbf{p}_i$ on the object.
\item $\mu_c$: coefficient of friction between the object and the manipulator
  end effector.

\end{itemize}
\subsection{Force-Motion Model}
In this section we review the basics of force-motion models for planar
sliding and the mechanics of pushing. We refer the readers to
\cite{Goyal1989, Mason1986a, Zhou16} for more details. 
Given a body twist $\mathbf{V}$, the  components of the friction
wrench $\mathbf{F}$ are given by integrations over $R$:  
\begin{align}
[F_x;F_y]  = \int_{R} \mathbf{f}_{s}(\mathbf{r})\,dr, \quad \tau =
\int_{R} \mathbf{r} \times \mathbf{f}_{s}(\mathbf{r}) \,dr.
\end{align}  
We can compute $\mathbf{F}$ for each $\mathbf{V}$ and form
the set of all possible friction wrenches. Goyal et
al. \cite{Goyal1989} defined the set boundary
as limit surface. It is shown that the friction wrench set is
convex and points on the limit surface correspond to friction
wrenches when the object slides. Additionally, the normal for a point (wrench) on the limit surface is parallel to the corresponding twist.
Zhou et al \cite{Zhou16} showed that level sets of homogeneous even
degree convex polynomials can approximate the limit
surface geometry sufficiently well. Denote by $H(\mathbf{F})$ the convex polynomial function, the twist $\mathbf{V}$ for a given friction wrench $\mathbf{F}$ is parallel to the gradient $\nabla H(\mathbf{F})$:
\begin{align}
\mathbf{V} &= k \nabla H(\mathbf{F}) \qquad k>0.
\end{align}
Additionally the inverse mapping can be efficiently computed. Given the twist $\mathbf{V}$, optimizing a least-squares objective
with the Gauss-Newton algorithm gives the unique
solution that corresponds to the wrench $\mathbf{F}$.

With a position-controlled manipulator, we are given contact points $\mathbf{p}$ with inward normals $\mathbf{n}_p$,
pushing velocities $\mathbf{v_p}$ and coefficient of friction $\mu_c$
between the pusher and the object. The task is to resolve the incurred
body twist $\mathbf{V}$ and consistent contact modes (sticking, slipping, breaking contact) to maintain wrench balance.

\begin{figure}[ht!]
\centering
\begin{subfigure}[t]{\mylenlong}
\centering
\includegraphics[width=\mylenlong]{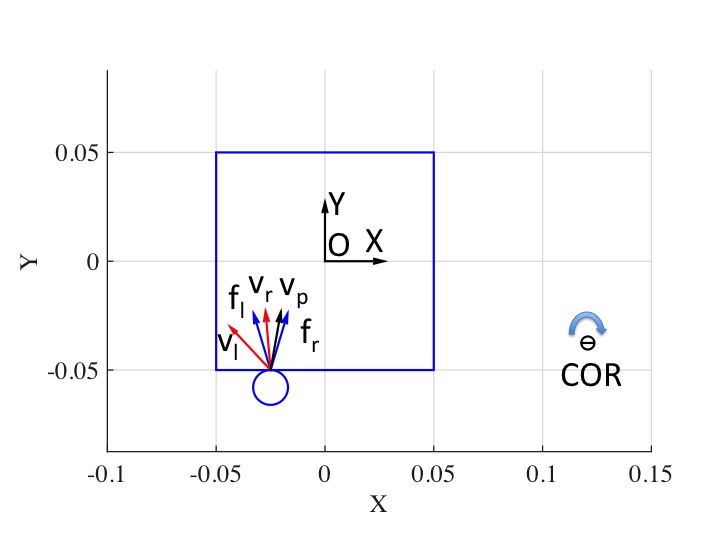} 
\caption{ The square has a uniform pressure
  distribution over 100 support grid points sharing the same
  coefficient of friction. 
The finger's pushing velocity is to the
  right of the motion cone and hence the finger will slide to the right. The instantaneous clockwise center of rotation is marked as a circle with a negative sign.}
\label{fig:motion_cone}
\end{subfigure}
~
\begin{subfigure}[t]{\mylenlong}
\centering
\includegraphics[width=\mylenlong]{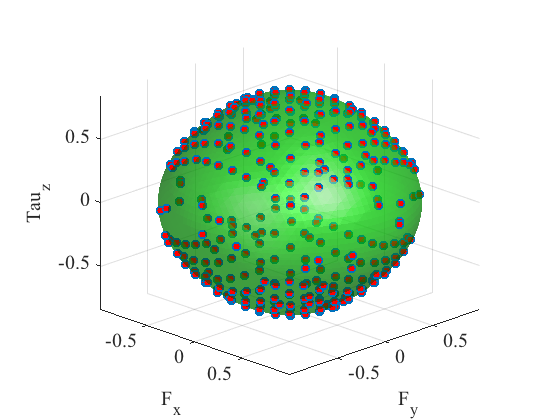}
\caption{Corresponding convex fourth order polynomial level set representation identified \cite{Zhou16} from two hundred random wrench twist pairs.\\} 
\vspace{-0.15in}
\label{fig:limit surface}
\end{subfigure} 

\caption{Mechanics of single point pushing and a fourth order
  representation of the limit surface.}
\label{fig:robot_exp_compare}
\vspace{-0.15in}
\end{figure}

\section{Contact Modelling}
\subsection{Single point pusher}
 \label{sec:single_contact}
Let the COM be the point of origin of the local body frame a level set
representation of limit surface $H(\mathbf{F})$.  We introduce the concept of motion cone first
proposed in \cite{Mason1986a}. Let  $J_p =
\begin{bmatrix} 
1 & 0 & -p_y\\
0 & 1 & p_x
\end{bmatrix}$,
and denote by $\mathbf{F}_l = J_{p}^T\mathbf{f}_l $ and $\mathbf{F}_r
= J_{p}^T\mathbf{f}_r $ the left and right edges of the applied
wrench cone with corresponding resultant twist directions $\mathbf{V}_l = \nabla
  H(\mathbf{F}_l)$ and $\mathbf{V}_r = \nabla
  H(\mathbf{F}_r)$ respectively. The left edge of the
motion cone is $\mathbf{v}_l = J_{p} \mathbf{V}_l$ and the right edge of the motion
cone is $\mathbf{v}_r = J_{p} \mathbf{V}_r$. 
If the contact point pushing velocity $\mathbf{v_p}$ is inside 
the motion cone, i.e., $\mathbf{v}_p \in \mathbf{K}(\mathbf{v}_l ,
\mathbf{v}_r) $, the contact sticks. 
When $\mathbf{v}_p$ is outside the motion cone, sliding occurs. If $\mathbf{v}_p$ is to the left of
$\mathbf{v}_l$, then the pusher slides left with respect to the
object. Otherwise if $\mathbf{v}_p$ is to the right of $\mathbf{v}_r$,
then the pusher slides right as shown in
Fig. \ref{fig:motion_cone}.

The following constraints hold assuming sticking contact:
\begin{align}
v_{px} &\quad=\quad  V_x - \omega p_y  \label{ctr1}\\
v_{py} &\quad = \quad  V_y + \omega p_x \label{ctr2}\\
\mathbf{V} & \quad  = \quad  k\cdot\nabla H(\mathbf{F}), \qquad k>0 \label{ctr3} \\
\tau & \quad= \quad  -p_y F_x + p_xF_y \label{ctr4}
\end{align}
In the case of ellipsoidal (convex quadratic) representation, i.e., $H(\mathbf{F}) =
\mathbf{F}^TA \mathbf{F}$ where $A$ is a positive definite matrix, the problem
is a full rank linear system with a unique solution. Lynch et al. \cite{lynch1992manipulation} gives
an analytical solution when $A$ is diagonal. We show that a unique analytical solution exists for any
positive definite symmetric matrix $A$. Let $\mathbf{t} = [-p_y, p_x, -1]^T $, $D =  [J_p^T,
A^{-1}\mathbf{t}]^T$ and $\mathbf{V}_{p} = [v_{p}^{T}, 0]^T$,
equations \ref{ctr1}-\ref{ctr4} can then be combined as one linear equation:
\begin{align}
\mathbf{V} = D^{-1}\mathbf{V}_{p} \label{eq:sticking_ellipsoid}
\end{align}

\begin{theorem}
\textit{Pushing with single sticking contact and the convex quadratic representation of limit surface (abbreviated as $P.1$) has a unique solution from a linear system.}  
\end{theorem}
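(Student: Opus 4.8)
The plan is to show that, under the quadratic model $H(\mathbf{F})=\mathbf{F}^TA\mathbf{F}$, the four constraints (\ref{ctr1})--(\ref{ctr4}) are equivalent to the single square linear system (\ref{eq:sticking_ellipsoid}) in the twist $\mathbf{V}$, and then to verify that the coefficient matrix $D$ is nonsingular; uniqueness of $\mathbf{V}$ follows at once.

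First I would put each constraint in matrix form. Equations (\ref{ctr1})--(\ref{ctr2}) are precisely $J_p\mathbf{V}=\mathbf{v}_p$. Since $\nabla H(\mathbf{F})=2A\mathbf{F}$, constraint (\ref{ctr3}) says $\mathbf{F}=\tfrac{1}{2k}A^{-1}\mathbf{V}$ for some $k>0$, and (\ref{ctr4}) is the homogeneous relation $\mathbf{t}^T\mathbf{F}=0$ with $\mathbf{t}=[-p_y,p_x,-1]^T$. Substituting the expression for $\mathbf{F}$ into $\mathbf{t}^T\mathbf{F}=0$ and dropping the positive factor $\tfrac{1}{2k}$ gives $(A^{-1}\mathbf{t})^T\mathbf{V}=0$ (using $A=A^T$). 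Stacking this equation underneath the two rows of $J_p$ produces exactly $D\mathbf{V}=\mathbf{V}_p$.

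The key step is to check $\det D\neq 0$. The two rows of $J_p$ are linearly independent, and a one-line computation shows $J_p\mathbf{t}=0$, so $\ker J_p=\operatorname{span}\{\mathbf{t}\}$; consequently $D$ is singular iff its third row $(A^{-1}\mathbf{t})^T$ annihilates $\mathbf{t}$, i.e. iff $\mathbf{t}^TA^{-1}\mathbf{t}=0$. (Equivalently, evaluating the determinant as a scalar triple product gives $\det D=-\,\mathbf{t}^TA^{-1}\mathbf{t}$.) Because $A$ is symmetric positive definite, so is $A^{-1}$, and $\mathbf{t}\neq 0$ since its last entry is $-1$; hence $\mathbf{t}^TA^{-1}\mathbf{t}>0$ and $D$ is invertible, giving the unique twist $\mathbf{V}=D^{-1}\mathbf{V}_p$. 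I would close by noting that every $k>0$ then yields a consistent wrench $\mathbf{F}=\tfrac{1}{2k}A^{-1}\mathbf{V}$ (its magnitude being immaterial in the quasi-static setting), and that $k>0$ with $\mathbf{F}$ in the wrench cone is guaranteed precisely by the sticking hypothesis $\mathbf{v}_p\in\mathbf{K}(\mathbf{v}_l,\mathbf{v}_r)$.

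I do not expect a genuine obstacle: the whole argument is elementary linear algebra. The one place needing a moment's thought is recognizing that $\mathbf{t}$ spans $\ker J_p$, which is exactly what makes nonsingularity of $D$ reduce to positive-definiteness of $A^{-1}$; everything else is routine bookkeeping.
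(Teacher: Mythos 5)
Your proposal is correct and follows essentially the same route as the paper: both reduce the theorem to the invertibility of $D$ and establish it from the two facts $J_p\mathbf{t}=0$ and $\mathbf{t}^TA^{-1}\mathbf{t}>0$ for positive definite $A^{-1}$ and nonzero $\mathbf{t}$. Your extra touches (identifying $\ker J_p=\operatorname{span}\{\mathbf{t}\}$, the determinant formula $\det D=-\mathbf{t}^TA^{-1}\mathbf{t}$, and the remark on recovering $\mathbf{F}$ up to the scale $k$) are consistent refinements of the paper's argument rather than a different approach.
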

\begin{proof}
It is obvious that we only need to prove $D$ is invertible.
1) The row vectors of $J_p$ are linearly independent and span a plane.
2) $J_p\mathbf{t} = 0$ implies $\mathbf{t}$ is
orthogonal to the spanned plane. 
3) If $D$ is not full rank, then
$A^{-1}\mathbf{t}$ must lie in the spanned plane and is therefore orthogonal to $\mathbf{t}$. This contradicts with the fact that $\langle \mathbf{t}, A^{-1}\mathbf{t}
\rangle > 0$ for positive definite matrix $A^{-1}$ and nonzero vector
$\mathbf{t}$. 
\end{proof}
\begin{corollary}
Pushing with single sticking contact and the general convex polynomial limit surface representation is reducible to solving a sequence of sub-problems $P.1$. 
\end{corollary}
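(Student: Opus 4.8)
The plan is to isolate the single source of nonlinearity in \ref{ctr1}--\ref{ctr4} and remove it by a local ellipsoidal surrogate. Constraints \ref{ctr1}, \ref{ctr2} and \ref{ctr4} are already linear; the only nonlinear relation is the force--motion law \ref{ctr3}. Writing $2m$ for the (even) degree of the homogeneous polynomial $H$, Euler's identity for homogeneous functions gives $\nabla^{2}H(\mathbf{F})\,\mathbf{F}=(2m-1)\,\nabla H(\mathbf{F})$, so the symmetric positive semidefinite matrix $A_{\mathbf{F}}:=\tfrac{1}{2(2m-1)}\nabla^{2}H(\mathbf{F})$ satisfies $\nabla H(\mathbf{F})=2A_{\mathbf{F}}\mathbf{F}$. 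Hence, at a given wrench $\mathbf{F}$, relation \ref{ctr3} is \emph{identical} to the ellipsoidal force--motion law $\mathbf{V}=2k\,A_{\mathbf{F}}\mathbf{F}$ of the quadratic limit surface with matrix $A_{\mathbf{F}}$ --- which is exactly the setting of $P.1$.

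Building on this I would set up the iteration: pick any $\mathbf{F}^{(0)}\neq 0$ (e.g.\ from an ellipsoidal fit of $H$, or an edge of the wrench cone); given $\mathbf{F}^{(j)}$, (i) form $A_{j}:=A_{\mathbf{F}^{(j)}}$; (ii) solve the $P.1$ system $\mathbf{V}^{(j+1)}=D_{j}^{-1}\mathbf{V}_{p}$ with $D_{j}=[\,J_{p}^{T},\,A_{j}^{-1}\mathbf{t}\,]^{T}$, which by Theorem~1 is well posed as long as $A_{j}\succ 0$; (iii) refresh the wrench by taking $\mathbf{F}^{(j+1)}$ to be the unique positive rescaling of the direction $A_{j}^{-1}\mathbf{V}^{(j+1)}$ onto the true level set $\{H=1\}$ (well defined since $H$ is positive and homogeneous along rays, provided the object is in motion, $\mathbf{V}^{(j+1)}\neq 0$). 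Every step is thus one instance of $P.1$, so the procedure is literally ``a sequence of sub-problems $P.1$''.

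It then remains to show that the limit of such a sequence solves the original system; I would do this by tracking which constraints are already exact at each iterate. The first two rows of $D_{j}\mathbf{V}^{(j+1)}=\mathbf{V}_{p}$ give $J_{p}\mathbf{V}^{(j+1)}=\mathbf{v}_{p}$, i.e.\ \ref{ctr1}--\ref{ctr2}; the third row gives $(A_{j}^{-1}\mathbf{t})^{T}\mathbf{V}^{(j+1)}=0$, so by symmetry of $A_{j}$, $\mathbf{t}^{T}\mathbf{F}^{(j+1)}\propto\mathbf{t}^{T}A_{j}^{-1}\mathbf{V}^{(j+1)}=0$, i.e.\ \ref{ctr4}; and $H(\mathbf{F}^{(j+1)})=1$ holds by construction. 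Only \ref{ctr3} holds in the frozen sense $\mathbf{V}^{(j+1)}\propto A_{j}\mathbf{F}^{(j)}$. But at a fixed point $\mathbf{F}^{\star}=\mathbf{F}^{(j)}=\mathbf{F}^{(j+1)}$ one has $A_{j}=A_{\mathbf{F}^{\star}}$, hence $\mathbf{V}^{\star}\propto A_{\mathbf{F}^{\star}}\mathbf{F}^{\star}=\tfrac12\nabla H(\mathbf{F}^{\star})$ with a strictly positive factor (the rescaling in (iii) is positive and $A_{j}\succ 0$), so \ref{ctr3} holds exactly with some $k^{\star}>0$. Thus a fixed point satisfies \ref{ctr1}--\ref{ctr4} together with $H(\mathbf{F})=1$ and is a genuine solution --- the claimed reduction. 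This is also the coupled counterpart of the Gauss--Newton inverse map of \cite{Zhou16}: steps (ii)--(iii) form one linearized solve, and repeating it converges to the nonlinear solution.

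The main obstacle is keeping each subproblem a bona fide $P.1$, i.e.\ $A_{j}\succ 0$ throughout: convexity of $H$ only gives $\nabla^{2}H(\mathbf{F})\succeq 0$, and a convex even-degree form may have a rank-deficient Hessian at a nonzero point, whereas the invertibility argument of Theorem~1 relied on $\langle\mathbf{t},A^{-1}\mathbf{t}\rangle>0$, which needs $A$ positive definite. I would handle this either via the (standard, and already implicit in the quadratic case) modelling assumption that the identified $H$ is strictly convex, so $\nabla^{2}H(\mathbf{F})\succ 0$ for all $\mathbf{F}\neq 0$, or by a vanishing Levenberg--Marquardt damping $A_{j}+\epsilon_{j}I$ with $\epsilon_{j}\to 0$, which restores positive definiteness without perturbing the limit. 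A full convergence-rate analysis is beyond what the corollary asserts (it claims only reducibility), but locally the iteration is a successive-linearization / Gauss--Newton scheme, contractive near a non-degenerate solution, and in practice a few $P.1$ solves suffice.
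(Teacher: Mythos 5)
Your proposal is correct and follows essentially the same route as the paper: the paper also reduces the general case to a sequence of $P.1$ solves by taking the Hessian $\nabla^2 H(\mathbf{F}_t)$ as a local ellipsoidal surrogate and iterating $\mathbf{F}_{t+1} = A_t^{-1}D^{-1}\mathbf{V}_p$ until convergence. Your additions --- the Euler-identity justification that the surrogate reproduces $\nabla H$ at the current iterate, the fixed-point consistency check, and the strict-convexity caveat needed for each $A_t$ to be positive definite --- are elaborations of steps the paper leaves implicit rather than a different argument.
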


For general convex polynomial representation $H(\mathbf{F})$, the
following optimization is equivalent to equation \ref{ctr1}-\ref{ctr4}: 
\begin{align}
  & \underset{\mathbf{F}}{\text{minimize}}
  & & \| J_p\nabla H(\mathbf{F}) - \mathbf{v_p}\| \label{eq:opt_goal}\\
 & \text{subject to}
 && \mathbf{t}^TF = 0
\end{align}
When $H(\mathbf{F})$ is of convex quadratic (ellipsoidal) form, the analytical minimizer
is $\mathbf{F} = A^{-1}D^{-1}\mathbf{V}_{p}$.  In the case of high
order convex homogeneous polynomials, we can resort to an iterative
 solution where we use the Hessian matrix as a local ellipsoidal
 approximation, i.e., set $A_t = \nabla^2 H(\mathbf{F_t}) $ and compute $\mathbf{F_{t+1}} =
 A_t^{-1}D^{-1}\mathbf{V}_{p}$ until convergence.  

When $\mathbf{v}_p$ is outside of the motion cone, assuming right sliding occurs without loss of generality, the wrench applied by the finger equals $\mathbf{F}_r$. The resultant object twist $\mathbf{V}$
follows the same direction as $\mathbf{V}_r$ with proper magnitude such that the contact is maintained:
\begin{align}
\mathbf{V} = s \mathbf{V}_r \\
s = \frac{\mathbf{n_p}^T \mathbf{v}_p}{\mathbf{n_p}^T \mathbf{v}_l} 
\end{align}

\subsection{Multi-contacts}
  \label{sec:multiple_contacts}
Mode enumeration is tedious for multiple contacts. The linear complementarity formulation for frictional contacts~(\cite{stewart1996implicit}) provides a convenient representation.  
Denote by $m$ the total number of contacts, the quasi-static force-motion equation is given by:
\begin{align}
\mathbf{V} &= k\nabla H(\mathbf{F}),\label{eq:lcpstart}
\end{align}
where the total applied wrench is the sum of normal and frictional
wrenches over all applied contacts:
\begin{align}
\mathbf{F} &= \sum_{i=1}^m J_{p_i}^T (f_{n_i} \mathbf{n}_{\mathbf{p_i}} +
D_{\mathbf{p_i}} \mathbf{f}_{t_i}).
\end{align}
$f_{n_i}$ is the normal force magnitude along the normal
$\mathbf{n_i}$, and $\mathbf{f}_{t_i}$ is the vector of tangential friction force magnitudes along the column vector basis of $D_{\mathbf{p_i}} =[\mathbf{t}_{p_i}, -\mathbf{t}_{p_i}]$.
The velocity at contact point $\mathbf{p_i}$ on the object is given by $J_{p_i}
\mathbf{V}$. 
The first order complementarity constraints on the normal force magnitude 
and the relative velocity are given by:  
\begin{align}
0 \leq f_{n_i} &\perp (\mathbf{n}_{p_{i}}^T( J_{p_i} \mathbf{V} -
\mathbf{v}_p)) \geq 0. 
\end{align}
The complementarity constraints for Coulomb friction are given by: 
\begin{align}
0 \leq \mathbf{f}_{t_i} & \perp ({D}_{\mathbf{p_i}}^T( J_{p_i} \mathbf{V} -
\mathbf{v}_p) + \mathbf{e} \lambda_i) \geq 0, \\
0 \leq \lambda_i & \perp (\mu_if_{n_i} - \mathbf{e}^T\mathbf{f}_{t_i}) \geq 0, \label{eq:lcpend}
\end{align}
where $\mu_i$ is the coefficient of friction at $\mathbf{p_i}$ and $\mathbf{e} = [1;1]$. 
In the case of ellipsoid (convex quadratic) representation, i.e., $H(\mathbf{F}) =
\mathbf{F}^TA \mathbf{F}$ where $A$ is a positive definite matrix, equations
 \ref{eq:lcpstart} to \ref{eq:lcpend} can be written in matrix form: 
\begin{align}
\label{eq:lcpmatrix}
\begin{bmatrix}
0 \\
\alpha \\
\beta \\
\gamma 
\end{bmatrix} & = 
\begin{bmatrix}
A^{-1}/k & -N^T & -L^T & 0 \\
N & 0 & 0 & 0 \\
L & 0 & 0 & E \\
0 & \mathbf{\mu} & -E^T & 0
\end{bmatrix}
\begin{bmatrix}
\mathbf{V} \\
\mathbf{f}_n \\
\mathbf{f}_t \\
\lambda
\end{bmatrix}
+ \begin{bmatrix}
0 \\
\mathbf{a} \\
\mathbf{b} \\
0
\end{bmatrix}, \\ \nonumber
 0 & \leq \begin{bmatrix}
\alpha \\
\beta \\
\gamma 
\end{bmatrix} \perp
\begin{bmatrix}
\mathbf{f}_n \\
\mathbf{f}_t \\
\lambda
\end{bmatrix} \geq 0, 
\end{align} 
where the binary matrix $E \in R^{2m \times m}$ equals $ 
\begin{bmatrix} 
\mathbf{e} & & \\
& \ddots & \\
& & \mathbf{e} 
\end{bmatrix}$,  $\mathbf{\mu} = [\mu_1,\dots, \mu_m ]^T$, the stacking
matrix $N \in R^{m\times 3}$ equals $[\mathbf{n}_{p_{1}}^TJ_{p_1}; \dots;\mathbf{n}_{p_{m}}^TJ_{p_m} ]$,  the stacking
matrix $L \in R^{2m\times 3}$ equals $[D_{p_1}^T J_{p_1};\dots;D_{p_m}^T J_{p_m} ]$, 
the stacking vector $\mathbf{s_a} \in R^{m}$ equals $[-\mathbf{n}_{p_1}^T\mathbf{v}_{p_1} ,\dots,
-\mathbf{n}_{p_m}^T\mathbf{v}_{p_m}]^T$ and vector $\mathbf{s_b} \in
R^{2m}$ equals $[-D_{p_1}^T\mathbf{v}_{p_1} ,\dots,
-D_{p_m}^T\mathbf{v}_{p_m} ]^T$.
 
Note that the positive scalar $k$ will not affect the solution value
of $\mathbf{V}$ since $\mathbf{f}_n$ and $\mathbf{f}_t$ will scale
accordingly. Hence, we can drop the scalar $k$ and further substitute
$\mathbf{V} = A(N^T\mathbf{f}_n + L^T\mathbf{f}_t)$ into
equation \ref{eq:lcpmatrix} and reach the standard linear
complementarity form as follows: 
\begin{align}
\label{eq:lcp_std} 
\begin{bmatrix}
\alpha \\
\beta \\
\gamma 
\end{bmatrix} & = 
\begin{bmatrix}
NAN^T &NAL^T & 0 \\
LAN^T & LAL^T & E \\
\mathbf{\mu} & -E^T & 0
\end{bmatrix}
\begin{bmatrix}
\mathbf{f}_n \\
\mathbf{f}_t \\
\lambda
\end{bmatrix}
+ \begin{bmatrix}
\mathbf{s_a} \\
\mathbf{s_b} \\
0
\end{bmatrix}, \\ \nonumber
 0 & \leq \begin{bmatrix}
\alpha \\
\beta \\
\gamma 
\end{bmatrix} \perp
\begin{bmatrix}
\mathbf{f}_n \\
\mathbf{f}_t \\
\lambda
\end{bmatrix} \geq 0.
\end{align}
Similarly, for the case of high order convex homogeneous polynomials, we can iterate between taking the linear Hessian approximation and solving the LCP problem in equation~\ref{eq:lcp_std} until convergence. 
\begin{lemma}
The object is quasi-statically jammed or grasped if the LCP problem (equation~\ref{eq:lcp_std}) yields no solution.
\end{lemma}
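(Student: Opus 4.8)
The plan is to argue by contraposition, leaning on the fact already established above that the LCP \eqref{eq:lcp_std} is an \emph{exact} reformulation of the quasi-static contact conditions \eqref{eq:lcpstart}--\eqref{eq:lcpend}, not merely a relaxation of them. First I would recall that \eqref{eq:lcpstart}--\eqref{eq:lcpend} encode precisely three things: the force--motion law $\mathbf{V}=k\nabla H(\mathbf{F})$ through the limit surface, the non-penetration/contact-maintenance complementarity at every pusher point $\mathbf{p}_i$, and Coulomb's law together with maximum dissipation at each contact. The algebraic reduction shown above --- eliminating the positive scalar $k$ and substituting $\mathbf{V}=A(N^T\mathbf{f}_n+L^T\mathbf{f}_t)$ into \eqref{eq:lcpmatrix} --- is an equivalence in both directions, so a triple $(\mathbf{V},\mathbf{f}_n,\mathbf{f}_t)$ describing a quasi-statically consistent instantaneous object motion exists \emph{iff} \eqref{eq:lcp_std} admits a solution $(\mathbf{f}_n,\mathbf{f}_t,\lambda)$.

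Next I would spell out what non-existence actually rules out. Since $A$ is positive definite, $\mathbf{V}=0$ forces $N^T\mathbf{f}_n+L^T\mathbf{f}_t=0$, i.e. the contact wrenches self-equilibrate; conversely the trivial choice $\mathbf{f}_n=\mathbf{f}_t=0$, $\lambda=0$ solves \eqref{eq:lcp_std} exactly when $\mathbf{s_a}\ge 0$, i.e. every pusher is receding from or sliding tangentially along the object. Hence ``no solution'' simultaneously excludes (a) the benign rest state in which no contact is loaded, and (b) every genuine sliding or sticking motion of the object. The only remaining mechanical outcome is that the object stays put ($\mathbf{V}=0$) while the manipulator keeps executing its commanded motion and the contacts carry a non-trivial self-equilibrated load --- which is, by definition, the object being wedged between contacts (jamming) or held in force/form closure (grasping). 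For the high-order polynomial model the same conclusion follows by applying this reasoning to the converged Hessian-approximation subproblem.

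The step I expect to be the main obstacle is the interpretive bridge: a priori, ``the rigid quasi-static LCP is infeasible'' is a statement about a model, not about the physical object. I would address this by observing that infeasibility arises exactly when the commanded pusher velocities demand an inward relative motion that \emph{no} rigid object twist can absorb while balancing the support friction wrench; in the quasi-static limit of a stiff contact this is resolved by the contact forces growing until the object is immobilized, which is precisely jamming/grasping. If a fully rigorous statement is desired, I would additionally invoke standard LCP theory --- the block matrix in \eqref{eq:lcp_std} is the Stewart--Trinkle frictional-contact matrix~\cite{stewart1996implicit} with the limit-surface form $A$ in place of the inverse mass matrix and is copositive --- so non-existence is not a degeneracy artifact but reflects a genuinely empty feasible set, at which point the preceding paragraph pins down that empty feasibility means $\mathbf{V}=0$ under load.
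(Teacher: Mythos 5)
Your proposal is correct, and its logical skeleton coincides with the one-sentence argument the paper gives in the text: non-existence of a solution to equation~\ref{eq:lcp_std} means either no penetration-free instantaneous twist exists, or no feasible twist has a support-friction load that can be balanced by an admissible contact wrench, and the only remaining physical outcome is that neither body moves. Where you genuinely diverge is in what is offered as the actual proof. The paper's centerpiece is a graphical argument (Fig.~\ref{fig:jamming}) for a concrete two-contact squeeze: moment labeling confines the admissible center of rotation to a band and to one of two segments where a contact sticks, while the force-dual construction of \cite{brost1991graphical} shows the CORs generated by the admissible friction forces lie in regions contradicting that placement, hence no motion is possible. You instead stay entirely algebraic: you argue the reduction from equations~\ref{eq:lcpstart}--\ref{eq:lcpend} to \ref{eq:lcp_std} is an equivalence (correct --- the scalar $k$ can be absorbed into $\mathbf{f}_n,\mathbf{f}_t,$ without disturbing any complementarity pair), characterize when the trivial all-contacts-unloaded solution exists, note that $\mathbf{V}=0$ with $A\succ 0$ forces a self-equilibrated contact load, and invoke copositivity of the Stewart--Trinkle-type matrix so that infeasibility is structural rather than degenerate. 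The graphical route buys physical insight but verifies only one configuration; your route is general and, to its credit, makes explicit the interpretive gap the paper glosses over --- that ``the rigid quasi-static LCP is infeasible'' is a statement about a model and must be \emph{identified} with physical immobilization rather than deduced, since ``jammed or grasped'' is never formally defined. Neither argument closes that gap rigorously, so you are not below the paper's own standard; your honest flagging of it is the right call.
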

Fig. \ref{fig:jamming} provides a graphical proof. When equation \ref{eq:lcp_std} yields no solution, either there is no feasible
kinematic motion of the object without penetration or all the
friction loads associated with the feasible instantaneous twists cannot balance against any element from the set of possible applied wrenches. In this case, the object is quasi-statically jammed or grasped between the fingers. Neither the object nor the end effector can move.  
\begin{figure}[ht!]
\centering
\includegraphics[width=\mylenlong]{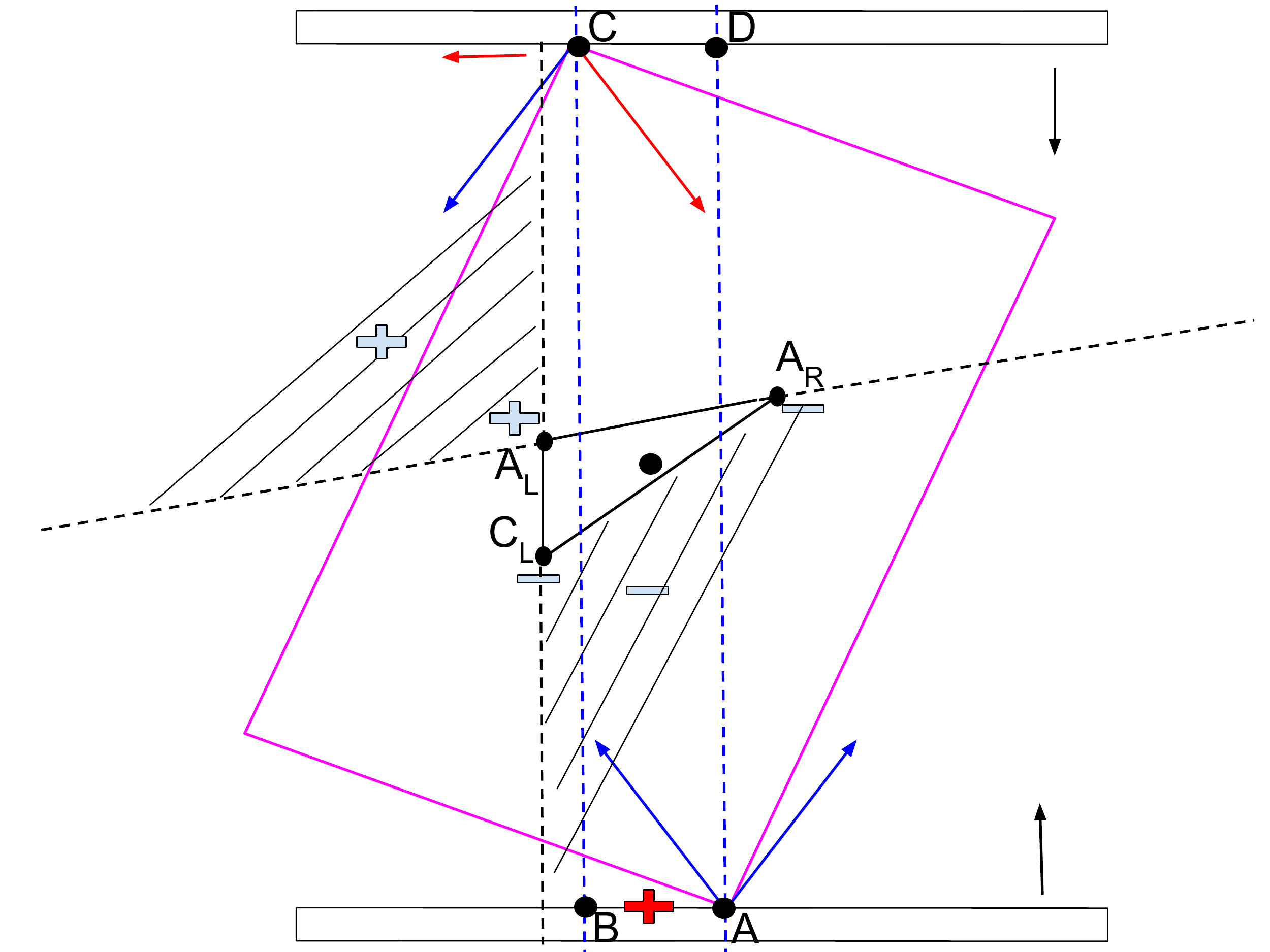} 
\caption{Using moment labeling (\cite{mason2001mechanics}), the center of rotation (COR) has positive sign (counter-clockwise) and can only lie in the band
  between the two blue contact normal lines. Further, the COR must lie on segment AB (contact point A sticks) or segment CD (contact point C sticks) since otherwise both contacts will slip, but the total wrench from the
  two left edges of the friction cones has negative moment which cannot
  cause counter-clockwise rotation. Without loss of generality, we can
  assume COR (red plus) lies on segment AB, leading to
  sticking contact at A and left sliding at C. Following a similar
  analysis using the force dual
  graphical approach (\cite{brost1991graphical}), each single friction force can be mapped to its instantaneous resultant signed COR whose convex
  combination forms the set of all possible CORs under the composite
  friction forces.  The COR can either be
  of positive sign in the upper left hatched region or negative sign in the
  lower right hatched region which contradicts with the proposed AB segment.  
Hence jamming occurs and neither the gripper nor the
  object can move. This corresponds exactly to the no solution case of
  equation \ref{eq:lcp_std}.\\ }
\vspace{-0.15in}
\label{fig:jamming}
\end{figure}

\section{Stochasticity} \label{sec:stochasic}
Uncertainty is inherent in frictional interaction. 
Two major sources contribute to the uncertainty in planar motion: 1)
indeterminancy of the supporting friction distribution $f_{s}(r)$ due
to changing pressure distribution and coefficients of friction between the object and
support surface; 2) the coefficient of friction $\mu_c$ between the object and the robot end effector.
We sample $\mu_c$ uniformly from a given range. To model the effect of changing support friction
distribution, for a even degree-$d$ strictly convex polynomial (except at the point of origin) $H(\mathbf{F};a) =\sum_{i=1}^{m} a_iF_x^{i_1}F_y^{i_2}F_z^{d-i_1-i_2}$ with $m$
monomial terms~\cite{Zhou16}, we sample the polynomial coefficient parameters $a$ in from a distribution that satisfies:
\begin{enumerate}
\item Samples from the distribution should result in a even degree
  homogeneous convex polynomial to represent the limit surface. 
\item The mean can be set as a prior estimate and the amount of variance controlled by one parameter.
\end{enumerate}
The Wishart distribution $S \sim W(\hat{S}, n_{df})$
\cite{wishart1928generalised} with mean $n_{df}S_{est}$ and variance
$Var(S_{ij}) = n_{df}(\hat{S}_{ij}^2 + \hat{S}_{ii}\hat{S}_{jj})$ is
defined over symmetric positive semidefinite random matrices as a
generalization of the chi-squared distribution to multi-dimensions.
For ellipsoidal (convex quadratic) $H(\mathbf{F}; A) = A^T\mathbf{F}A$, we can directly
sample $A$ from $\frac{1}{n_{df}}W(A_{est}, n)$ with mean $A_{est}$ and variance $Var(A_{ij}) = \frac{(\hat{A_{est}}_{ij}^2 + \hat{A_{est}}_{ii}\hat{A_{est}}_{jj})}{n_{df}}$ where $A_{est}$ is some
estimated value from data or fitted for a particular pressure
distribution. Sampling from general convex polynomials is
hard. Fortunately, we find that sampling from the 
sos-convex \cite{parrilo2000structured,magnani2005tractable}
polynomials subset is not. The key is the coefficient vector $a$ of a sos-convex
polynomial $H(\mathbf{F};a)$ has a unique one-to-one mapping to a positive definite
matrix $Q$ so that we can first sample $\tilde{Q}$ from $\frac{1}{n_{df}} W(Q_{est},
n_{df})$ \footnote{We have noted that adding a small constant on the
  diagonal elements of $\tilde{Q}$ improves numerical stability.}
and then map back to $\tilde{a}$. Given a sos-convex polynomial
representation of $H(\mathbf{F};a)$, the Hessian matrix $\nabla^2
H(\mathbf{F};a)$ at $\mathbf{F}$ is positive definite, i.e., for any non-zero vector  $\mathbf{z} \in \mathbb{R}^3$,
there exists a positive-definite matrix $Q$ such that   
\begin{align}
\mathbf{z}^T\nabla^2 H(\mathbf{F};a) \mathbf{z}  &= y(\mathbf{F},\mathbf{z})^T Q y(\mathbf{F},\mathbf{z}) > 0. \label{eq:sos-matrix}
\end{align}
In the case of fourth order polynomial we have $y(\mathbf{F},\mathbf{z}) = [z_1F_x,z_1F_y, z_1F_z, z_2F_x, z_2F_y, z_2F_z, z_3F_x, z_3F_y, z_3F_z]^T$.
$Q$ and $a$ are related through a set of
linear equalities: equation (\ref{eq:sos-matrix}) can be
written as a set of $K$ sparse linear constraints on $Q$ and $a$.
\begin{align} 
\Tr(C_{k} Q) &= b_k^Ta , \quad k\in \{1 \dots K\} 
\end{align}
where $C_k$ and $b_k$ are the constant sparse element indicator matrix and
vector that only depend on the polynomial degree $d$.
Hence we can map each sampled $\tilde{Q}$ back to $\tilde{a}$. The
degree of freedom parameter $n_{df}$ determines the sampling
variance. The smaller $n_{df}$ is, the noisier the system will be.

\section{Experimental Evaluation} \label{sec:expvalidations}
\subsection{Evaluation of Deterministic Pushing Model}
We evaluate our deterministic model on the large scale MIT pushing
dataset \cite{yu2016more} and a smaller dataset \cite{Zhou16} that has discrete
pressure distributions. For the MIT pushing dataset, we use 10mm/s
velocity data logs for 10 objects\footnote{Despite having the same experimental set up and similar
  geometry and friction property to the other two triangular shapes, the results for object Tr2 is
  about 1.5 -2 times worse. Due to time constraint, we have not ruled
  out the possibility that the data for object Tr2 is corrupted.} on 3 hard surfaces including delrin, abs and plywood. 
The force torque signal is first filtered with a low pass filter and 5
wrench-twist pairs evenly spaced in time are extracted from each push
action json log file. 10 random train-test splits (20 percent of the logs
for training, 10 percent for validation and the rest for
testing) are conducted for each object-surface scenario. On average,
around 600 wrench-twist pairs are used for identification. 

Given two poses $q_1 = [x_1, y_1, \theta_1]$ and $q_2 = [x_2, y_2,
\theta_2]$, we define the deviation metric $d(q_1, q_2)$ which
combines both the displacement and angular offset as $d(q_1, q_2) =
\sqrt{(x_1 - x_2)^2 + (y_1 - y_2)^2} + \rho\cdot\min( |\theta_1 -
\theta_2|, 2\pi - |\theta_1 -
\theta_2| )$, where $\rho$ is the characteristic length of the object (e.g., radius of gyration or radius of minimum circumscribed circle).
A one dimensional coarse grid search
over the coefficient of friction $\mu_c$ between the pusher and object
is chosen to minimize average deviation of the predicted final pose
and ground truth final pose on training data. Table
\ref{table:result_mcube} shows the average metric with a 95\% percent
confidence interval. Interestingly, we find that using more training data does not improve the performance much. This is likely due to the inherent
  stochasticity (variance) and changing surface conditions as reported in \cite{yu2016more}.

The objects in the MIT pushing dataset are closer to uniform
pressure. We also evaluate on a smaller dataset \cite{Zhou16} that has discrete
pressure distributions, and in particular three points support whose
pressure can be derived exactly as ground truth. We use 400 wrench twists
sampled from the ideal limit surface for training. The coefficient of friction
between the object and pusher is determined by a grid search over
40 percent of the logs to determine. We use the remaining 60 percent to
evaluate simulation accuracy. Note that in both evaluations, the accuracy of deterministic models are upper bounded by the system variance.    

\begin{table*}[]  
\footnotesize                                                                                                                                              
\centering 
\setlength\tabcolsep{4pt} 
\begin{tabular}{|l|c|c|c|c|c|c|c|c|c|c|c|}   
\hline
&rect1 & rect2 & rect3 & tri1 & tri3 & ellip1 & ellip2 & ellip3
&hex & butter \\                                                                                          
\hline                                                                                                                                  
poly4-delrin & 8.28$\pm$0.29 & 5.37$\pm$0.23 & 6.10$\pm$0.21 & 9.71$\pm$0.33& 7.54$\pm$0.23 & 7.68$\pm$0.51 & 8.90$\pm$1.40 & 7.35$\pm$0.38 & 6.38$\pm$0.28 & 4.83$\pm$0.27 \\
\hline                                                                                                                                                       
quad-delrin &8.60$\pm$0.35 & 5.92$\pm$0.14 & 8.20$\pm$0.16 & 9.90$\pm$0.41 & 8.18$\pm$0.15 & 6.85$\pm$0.25 & 6.29$\pm$0.24 & 8.08$\pm$0.51 & 6.42$\pm$0.12 & 5.97$\pm$0.23 \\
\hline                                                                                  
delrin & 35.48 & 40.53 & 35.98 & 36.91 & 34.66 & 32.18 & 38.05 & 33.37 & 33.55 & 34.09 \\
\hline                                                                                                                                                       
poly4-abs &5.86$\pm$0.11 & 7.48$\pm$0.80 & 3.59$\pm$0.12 & 7.13$\pm$0.26  & 5.17$\pm$0.38 & 8.45$\pm$1.13 & 9.18$\pm$1.26 & 5.93$\pm$0.19 & 7.56$\pm$0.39 & 3.94$\pm$0.11 \\ 
\hline                                                                                                                                                       
quad-abs&6.07$\pm$0.16 & 6.74$\pm$0.27 & 6.19$\pm$0.18 & 8.00$\pm$0.37 & 7.17$\pm$0.37 & 6.66$\pm$0.28 & 7.69$\pm$0.27 & 5.78$\pm$0.21 & 8.19$\pm$0.21 & 5.39$\pm$0.15 \\ 
\hline                                                                                  
abs&34.14 & 39.74 & 33.98 & 35.43 & 32.37 & 32.68 & 33.53 & 32.45 & 33.23 & 33.53 \\
\hline                                                                                                                                                       
poly4-plywood&6.86$\pm$0.71 & 6.86$\pm$0.13 & 5.93$\pm$0.33 & 4.61$\pm$0.13 & 7.21$\pm$0.47 & 4.39$\pm$0.16 & 4.99$\pm$0.31 & 5.72$\pm$0.31 & 8.41$\pm$0.24 & 4.72$\pm$0.17 \\
\hline                                                                                                                                                       
quad-plywood&6.20$\pm$0.20 & 7.22$\pm$0.18 & 6.88$\pm$0.18 & 5.96$\pm$0.19 & 9.43$\pm$0.56 & 4.42$\pm$0.12 & 5.84$\pm$0.20 & 6.46$\pm$0.26 & 8.85 $\pm$0.17 & 6.05$\pm$0.22 \\
\hline                                                                                  
plywood&31.86 & 33.22 & 32.94 & 32.81 & 33.78 & 27.24 & 28.23 & 33.29 & 32.77 & 34.10 \\
\hline                                                                                                                                                       
\end{tabular}                                                                                                                                                
\caption{Average deviation (in mm) that combines both displacement and angular offset) between the simulated final pose and actual final pose with 95 percent confidence interval. The 3rd, 6th and 9th
rows are the deviation from the ground truth initial pose and final pose to indicate how much the object is moved due to the push. In most cases, the fourth order convex (poly4) polynomial has better accuracy. The average normalized percentage error for poly4 is 20.05\% and for quadratic is 21.39\%. However, the accuracy of a fixed deterministic model is bounded by the inherent variance of the system.}                                                                                                                                     
\label{table:result_mcube}        
\end{table*}

\begin{table}[htbp!]                                           
\centering                                              
\begin{tabular}{|l|c|c|c|c|}       
\hline
&3pts1 & 3pts2 & 3pts3 & 3pts4 \\                          
\hline                                                  
poly4-hardboard & 3.52$\pm$0.21 & 2.75$\pm$0.25 & 2.92$\pm$0.27 & 2.80$\pm$0.23 \\
\hline                                                  
quad-hardboard &3.82$\pm$0.24 & 3.63$\pm$0.27 & 3.35$\pm$0.23 & 3.96$\pm$0.28 \\
\hline
hardboard & 16.63 & 13.86 & 14.83 & 15.15 \\
\hline                                                 
poly4-plywood & 3.78$\pm$0.11 & 2.80$\pm$0.15 & 2.84$\pm$0.16 & 3.26$\pm$0.11 \\
\hline                                                  
quad-plywood & 4.24$\pm$0.15 & 3.56$\pm$0.17 & 3.28$\pm$0.08 & 4.12$\pm$0.13 \\
\hline          
plywood & 16.56 & 13.81 & 15.27 & 14.20 \\     
\hline                                   
\end{tabular}                                           
\caption{Average deviation (in mm) that combines both displacement and angular offset) between the simulated final pose and actual
final pose with 95 percent confidence interval for 3-point
support. The wrench-twist pairs used for training the model are
generated from the ideal limit surface. The average normalized error
for poly4 is 20.48\% and for quadratic is 24.97\%. }                                
\label{table:result_icra_pressure}        
\end{table}

\subsection{Pushing with stochasticity}
The experiment in \cite{yu2016more} demonstrates that the
same 2000 pushes in a highly controlled setting result in a
distribution of final poses. We perform simulations using the same
object and pusher geometry and push distance. The 2000 resultant trajectories and
histogram plot of pose changes are shown in Fig.
\ref{fig:single_push_traj} and \ref{fig:push_hist} respectively. 
We note that although the mean and variance pose changes are similar to the experiments with
abs material in \cite{yu2016more}, the distribution resemble a single Gaussian distribution which
differs from the multiple modes distribution in Figure 10 of
\cite{yu2016more}. We conjecture this is due to a time varying
stochastic process where coefficients of friction between surfaces
drift due to wear.

\begin{figure}[!h]
\centering
\begin{subfigure}[t]{\mylen}
\includegraphics[width=\mylen]{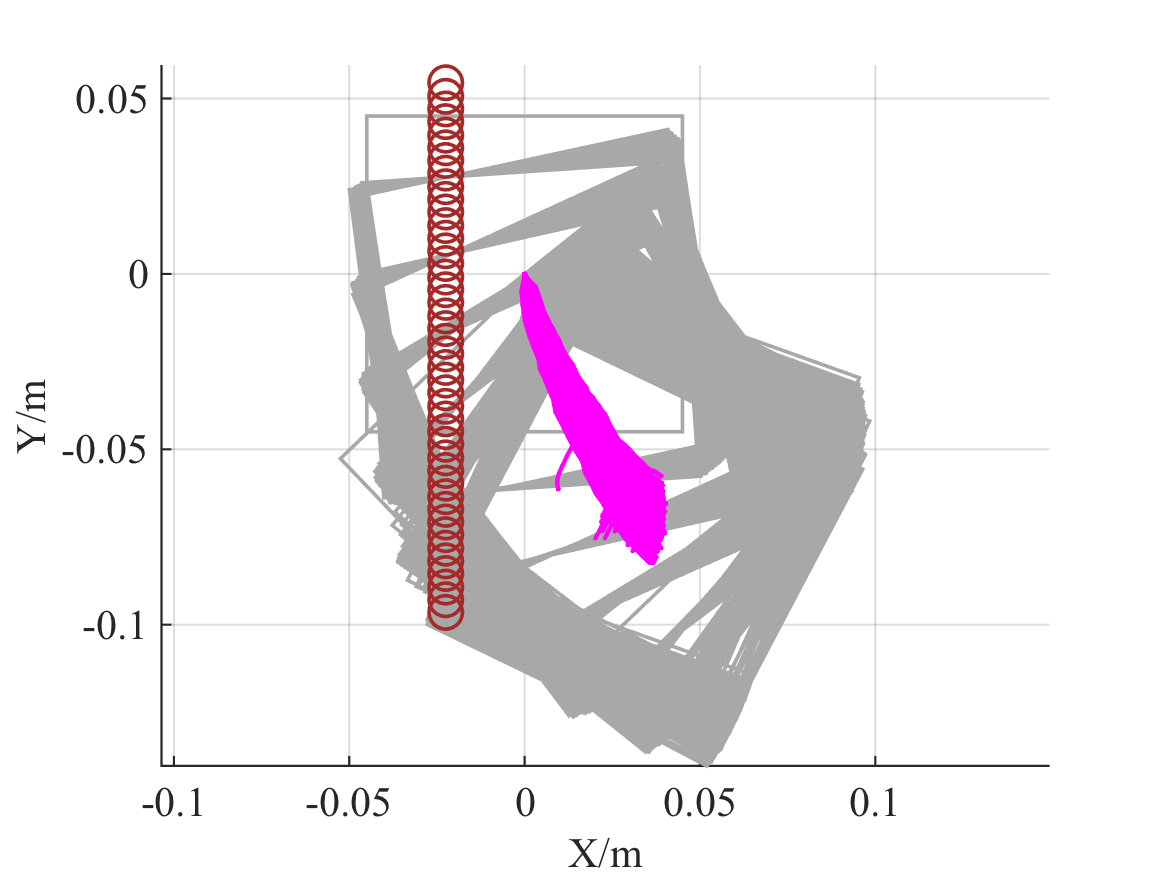} 
\caption{Simulation results.}
\end{subfigure}
\begin{subfigure}[t]{\mylen}
\includegraphics[width=\mylen]{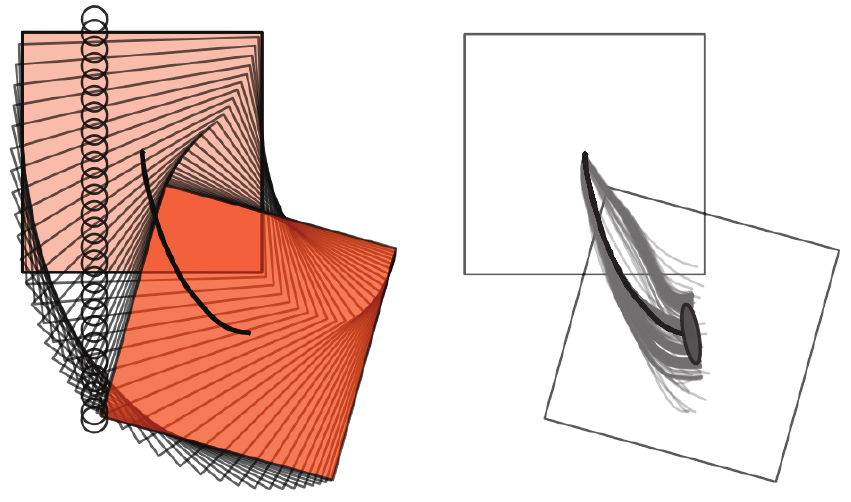}
\caption{Figure 9 of \cite{yu2016more}, reprinted with permission.}
\end{subfigure}
\caption{Stochastic modelling of single point pushing with fourth order sos-convex polynomial
  representation of the limit surface using wrench twist pairs generated
  from 64 grids with uniform pressure. The degree of
  freedom in the sampling distribution equals 20. The
  contact coefficient of friction between the pusher and the object is
  uniformly sampled from 0.15 to 0.35. The trajectories are
  qualitatively similar to the experimental results in Figure 9 of \cite{yu2016more}. \\ }
\label{fig:single_push_traj}
\vspace{-0.2in}                      
\end{figure}
\begin{figure}[!h]
\centering
\begin{subfigure}[t]{\mylenshort}
\centering
\includegraphics[width=\mylenshort]{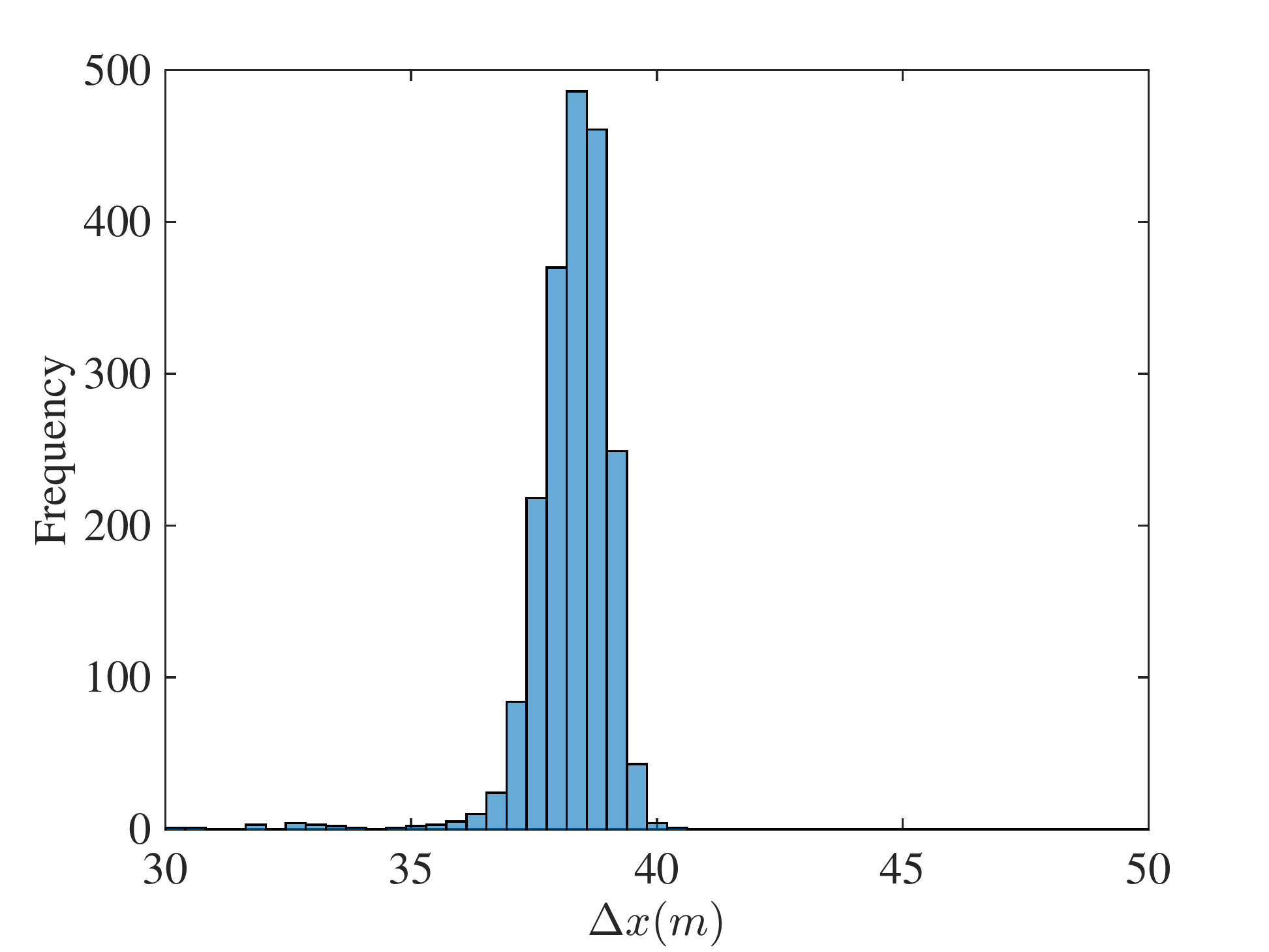} 
\caption{Histogram of $\Delta x$ \\ }
\end{subfigure}
\hspace{-1.8em}
~
\begin{subfigure}[t]{\mylenshort}
\centering
\includegraphics[width=\mylenshort]{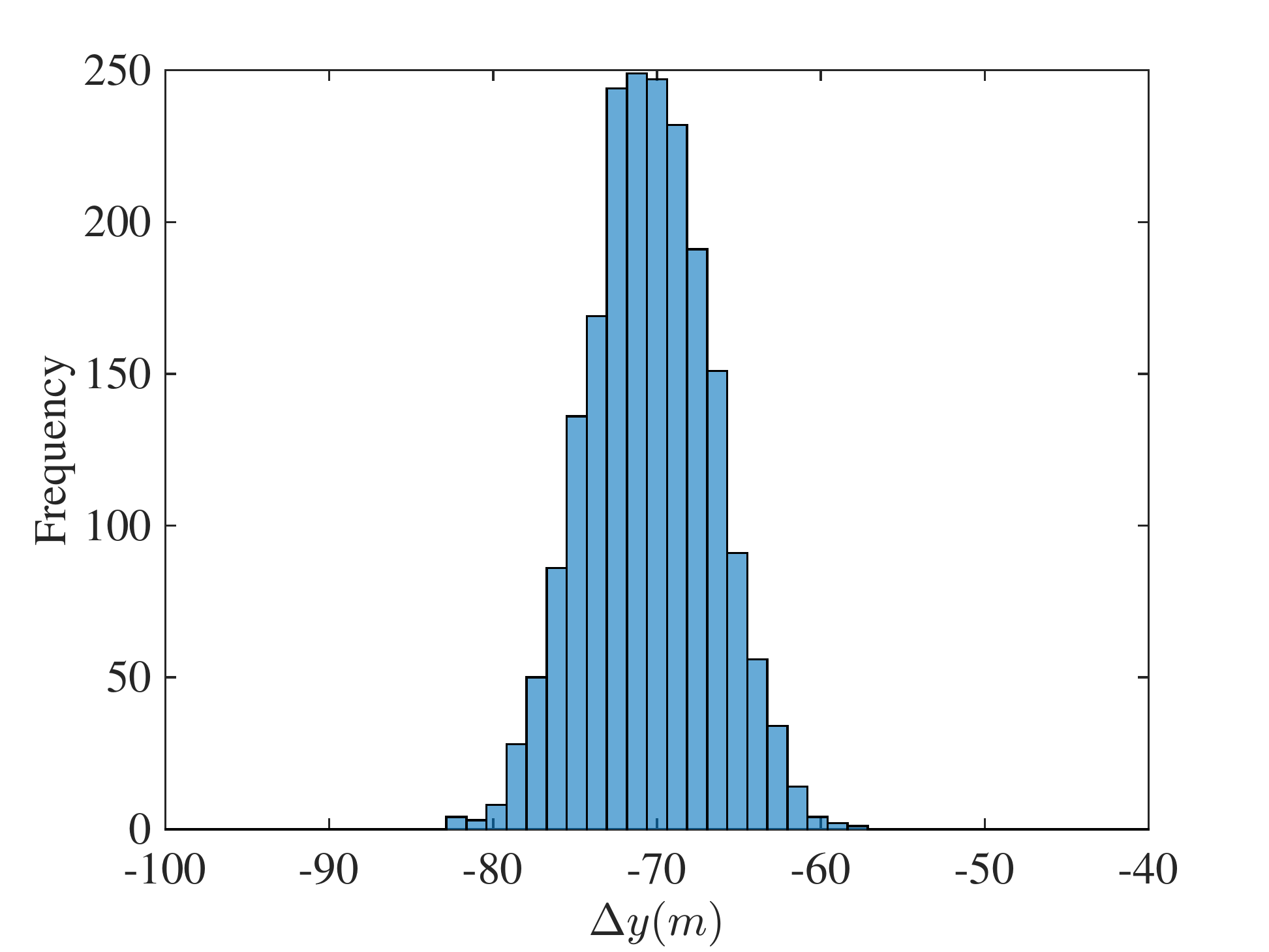}
\caption{Histogram of $\Delta y$ \\}
\end{subfigure} 
\hspace{-1.8em}
~
\begin{subfigure}[t]{\mylenshort}
\centering
\includegraphics[width=\mylenshort]{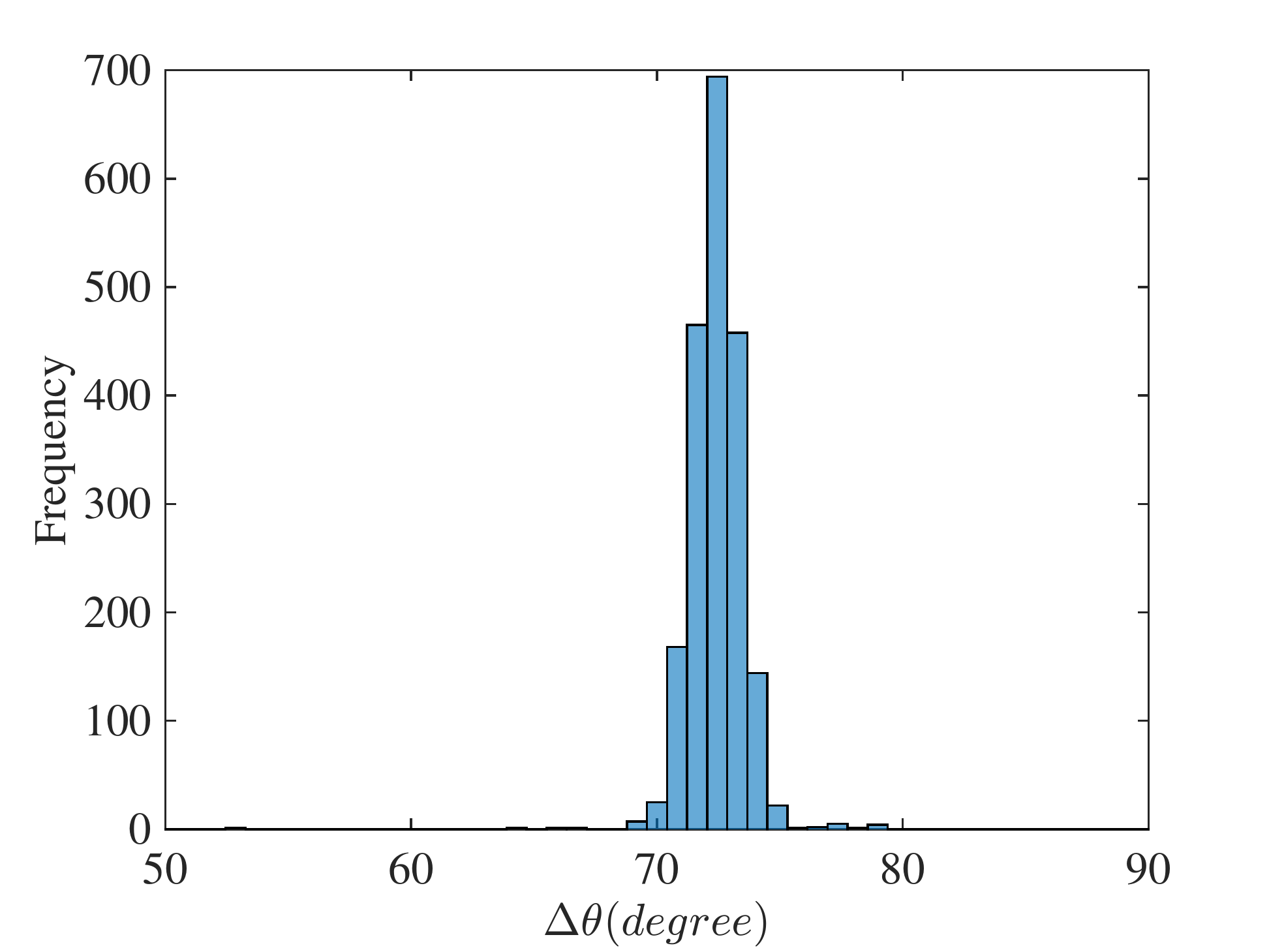}
\caption{Histogram of $\Delta \theta$ }
\end{subfigure}
\caption{Histogram of final poses for the 2000 pushing trajectories. The object
  is initially at (0,0,0). Note the curve resembles a Gaussian distribution.}
\label{fig:push_hist}
\vspace{-0.15in}                      
\end{figure}

We also evaluate the effects of uncertainty reduction with 2 point
fingers under the stochastic contact model. The circular object has
radius of 5.25cm. The two fingers separated by 10cm perform a
straight line push of 26.25cm. The desired goal is to have the object
centered with respect to the two fingers.
Fig. \ref{fig:twopoint_push1} and \ref{fig:twopoint_push2} compare the
resultant trajectories under different amount of system noise. We find
that despite larger noise in the resultant trajectories, the convergent
region of the stable goal pose differs by less than 5\% and the
difference is mostly around the uncertainty boundary. A kernel density
plot of the convergence region is shown in in
Fig. \ref{fig:twopoint_push_capture} for $n_{df} = 10$. We conclude
that multiple active constraints induce a large region of attraction.
\begin{figure*}[!h]
\centering
\begin{subfigure}[t]{\mylenmedium}
\centering
\includegraphics[width=\mylenmedium]{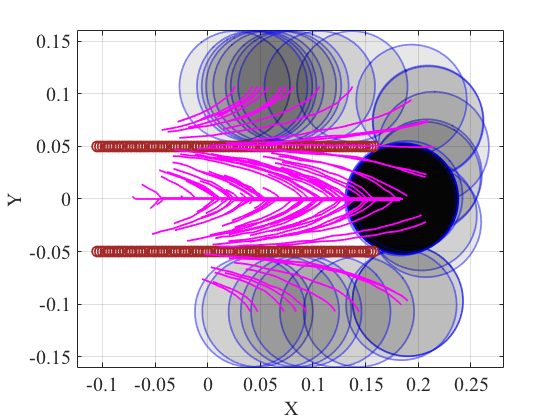} 
\caption{100 pushed trajectories of different initial poses using ellipsoid
  representation of $H(\mathbf{F})$ with $n_{df} = 200$.\\}
\label{fig:twopoint_push1}
\vspace{-0.1in}
\end{subfigure}
~
\begin{subfigure}[t]{\mylenmedium}
\centering
\includegraphics[width=\mylenmedium]{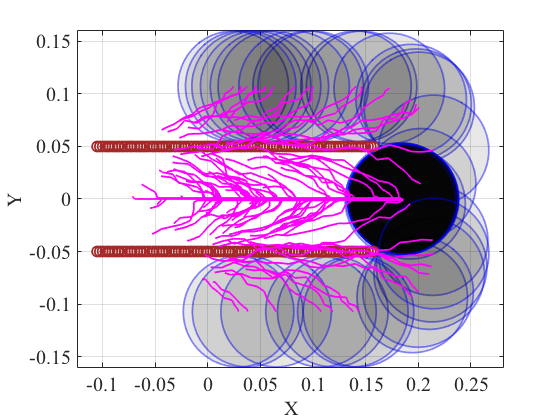} 
\caption{100 pushed trajectories of different initial poses using ellipsoid
  representation of $H(\mathbf{F})$ with $n_{df} = 10$.\\}
\label{fig:twopoint_push2}
\vspace{-0.1in}
\end{subfigure}
~
\begin{subfigure}[t]{\mylenmedium}
\centering
\includegraphics[width=\mylenmedium]{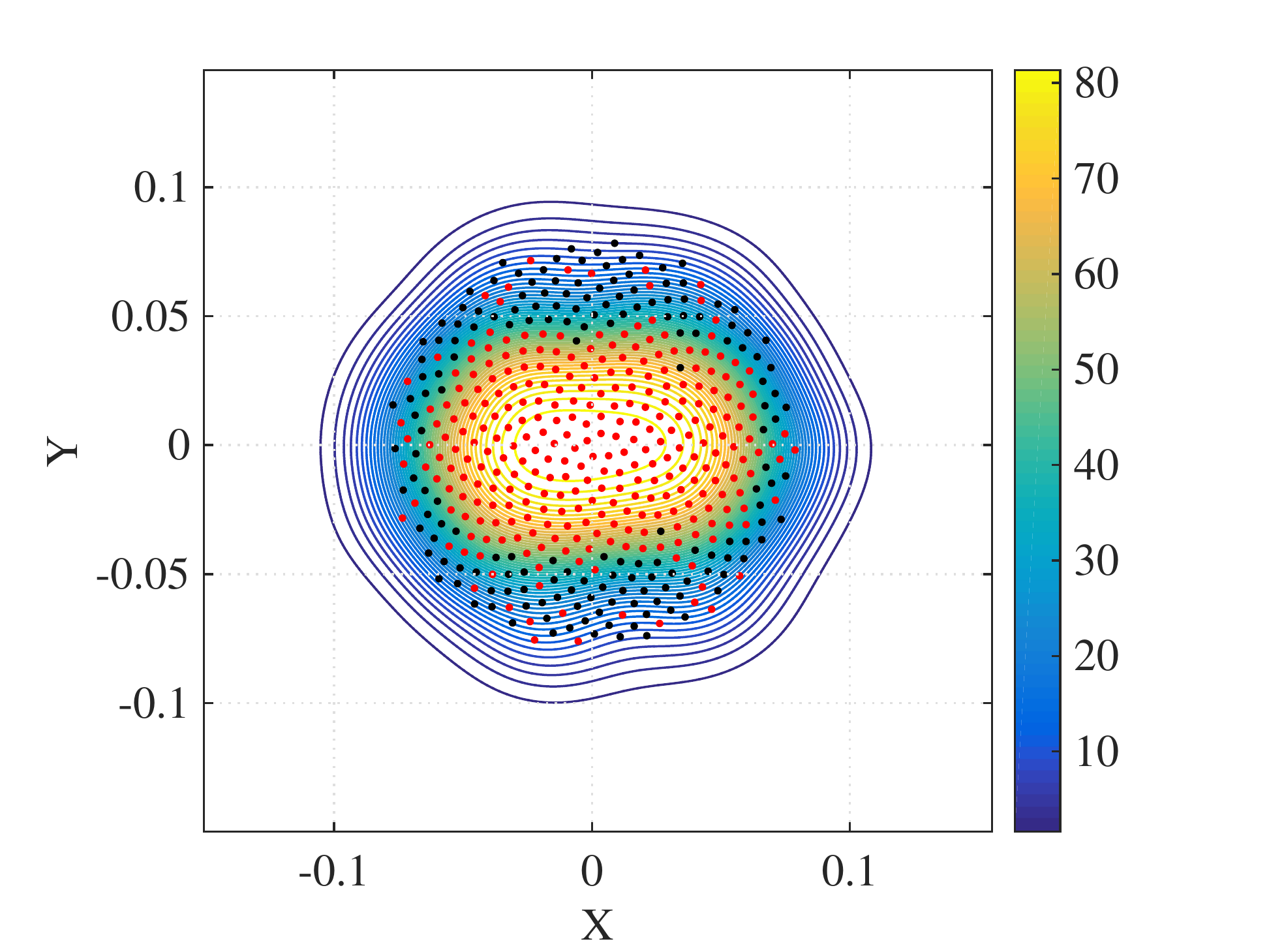}
\caption{Kernel density plot of the convergence region for $n_{df} =
  10$. Convergent initial poses are in red, and the rest are in black. \\}
\label{fig:twopoint_push_capture}
\vspace{-0.1in}
\end{subfigure}
\caption{Simulation results using the proposed
 contact model illustrating the process of two point fingers
  pushing a circle to reduce uncertainty. A total of 500 initial
  object center positions are uniformly sampled from a circle of
  radius 7.88cm. }
\label{fig:twopointpush}
\vspace{-0.1in}
\end{figure*}

\subsection{Grasping under uncertainty} \label{sec:squeeze}
We conduct robotic experiments to evaluate our contact model for
grasping. Fig. \ref{fig:rect_pressure} shows two rectangular objects
with the same geometry but different pressure
distributions. Another experiment is conducted for a butterfly shaped
object shown in Fig. \ref{fig:butterfly}. We use the Robotiq hand \cite{robotiq2016} 
and represent it as a planar parallel-jaw gripper with rectanglular
fingers as shown in Fig. \ref{fig:prev_grasp_rect} and Fig. \ref{fig:prev_grasp_butter}.
Convex quadratic limit surface parameterizations $H(\mathbf{F})$ are trained from
wrench-twist pairs from a uniform friction distribution along the
object boundary. The sampling degree of freedom $n_{df}$ equals 250
with contact friction coefficient $\mu_c$ sampled uniformly from
[0.015, 0.02]. The simulated results with the stochastic contact
model match well with the rectangles for both pressure
distribution. However, the model fails to capture the stability of
grasps and the deformation of objects. In the case of a butterfly-shaped
object, many unstable grasps and jamming equilbria exist, but as the fingers increase
the gripping force the object will ``fly'' away as the stored elastic
energy turns into large accelerations which violates the
quasistatic assumptions of our model, as revealed in the scattered
post-grasp distribution in Fig. \ref{fig:postposes_butter}. We also
compare the cases where dynamics do not play a major role:
Fig. \ref{fig:postposes_zoomed_butter} shows the zoomed in plots to
compare with simulation results in
Fig. \ref{fig:postposes_butter_sim}. We can see that the
model simulation deviates more compared with the case for rectangular
geometry. Comparing the histogram plot in
Fig. \ref{fig:hist_butter_sim} and Fig. \ref{fig:hist_butter_zoom},
we can see that the simulation returns more jamming and grasping final
states as illustrated by the spikes in $\theta$. 

\begin{figure*}[h!]
\centering
\begin{subfigure}[t]{\figbarlen}
\centering
\includegraphics[width=\figbarlen]{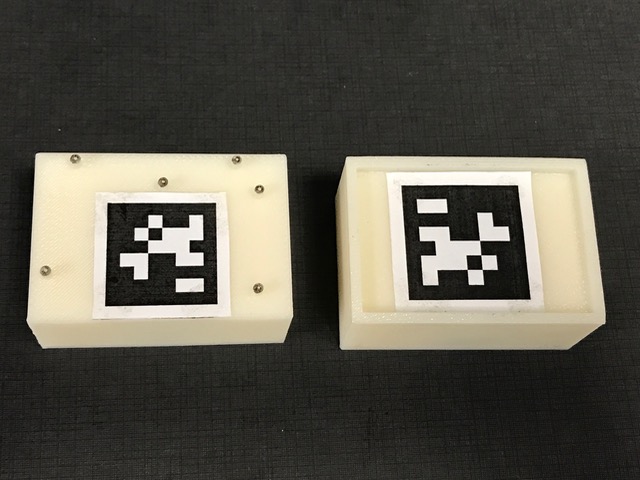} 
\setcounter{subfigure}{0}%
\caption{Two 50mm $\times$ 35mm rectangles with 6 points and boundary pressure distribution. }
\label{fig:rect_pressure}
\end{subfigure}
~
\begin{subfigure}[t]{\figbarlen}
\centering
\includegraphics[width=\figbarlen]{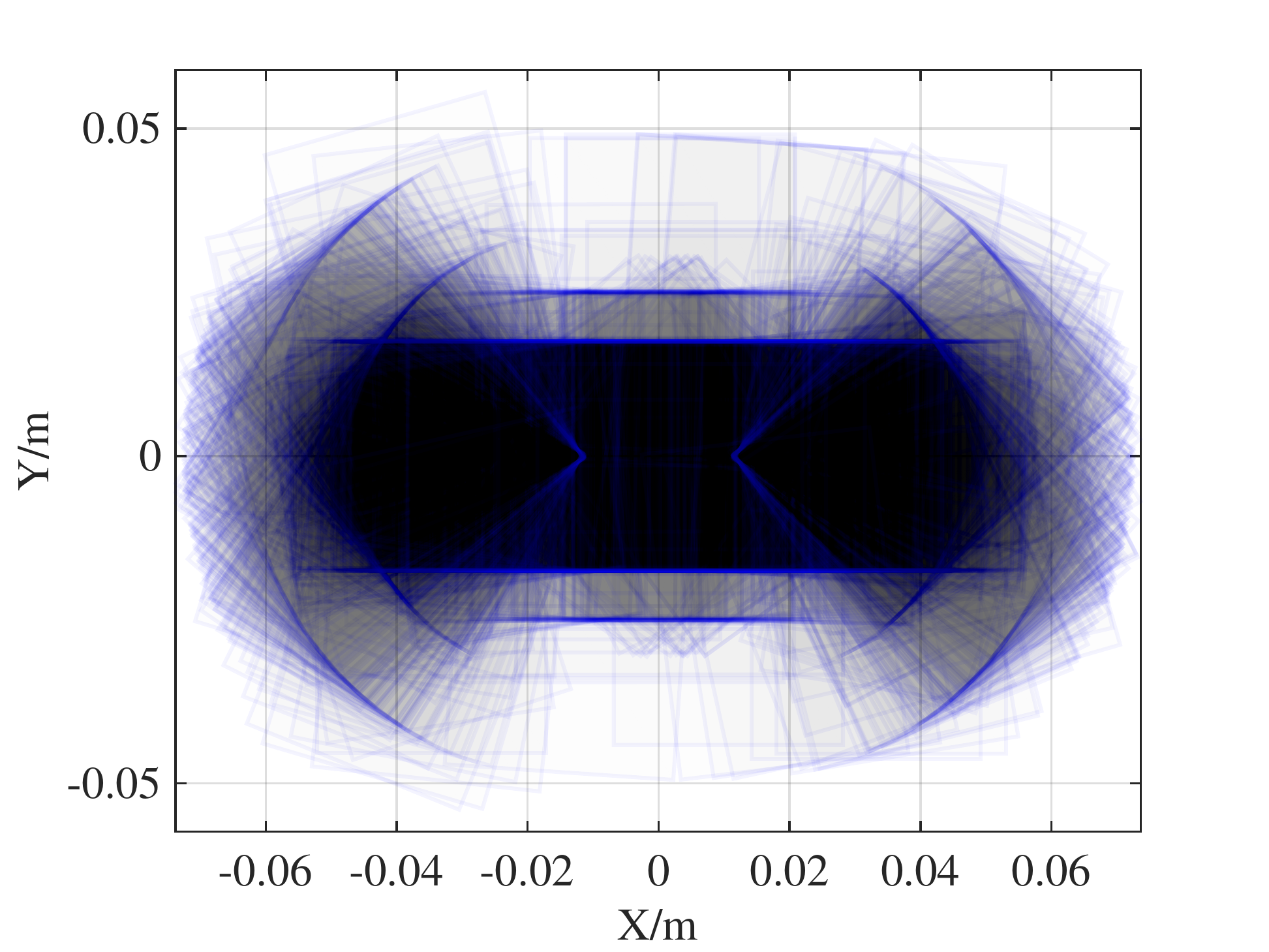} 
\setcounter{subfigure}{2}%
\caption{Distribution of the simulated post-grasp poses using the stochastic contact model.}
\label{fig:rect_post_grasp_sim}
\end{subfigure}
~
\begin{subfigure}[t]{\figbarlen}
\centering
\includegraphics[width=\figbarlen]{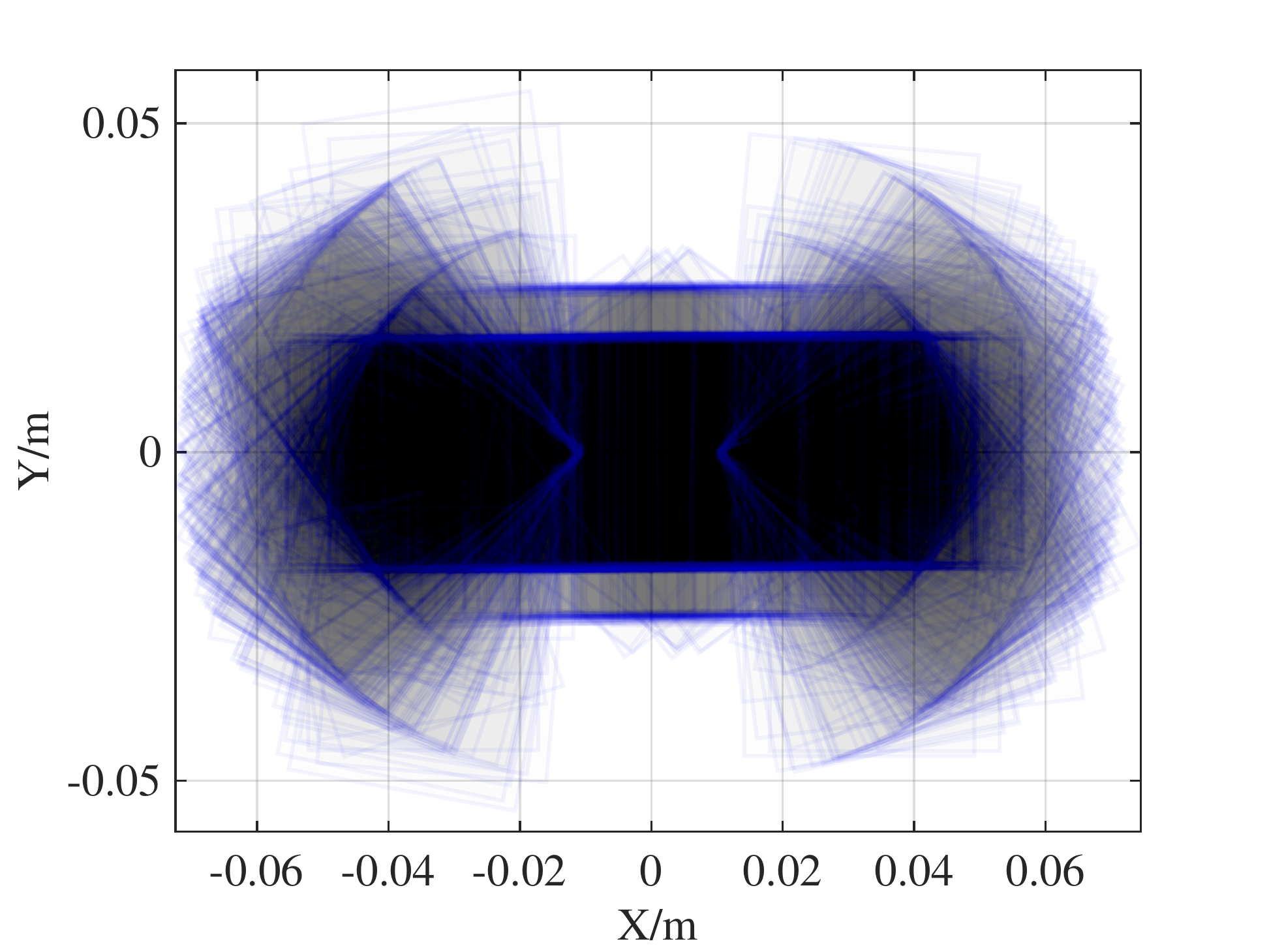} 
\setcounter{subfigure}{4}%
\caption{Distribution of the experimental post-grasp poses for
  the boundary pressure.}
\label{fig:rect_post_grasp_rim}
\end{subfigure}
~
\begin{subfigure}[t]{\figbarlen}
\centering
\includegraphics[width=\figbarlen]{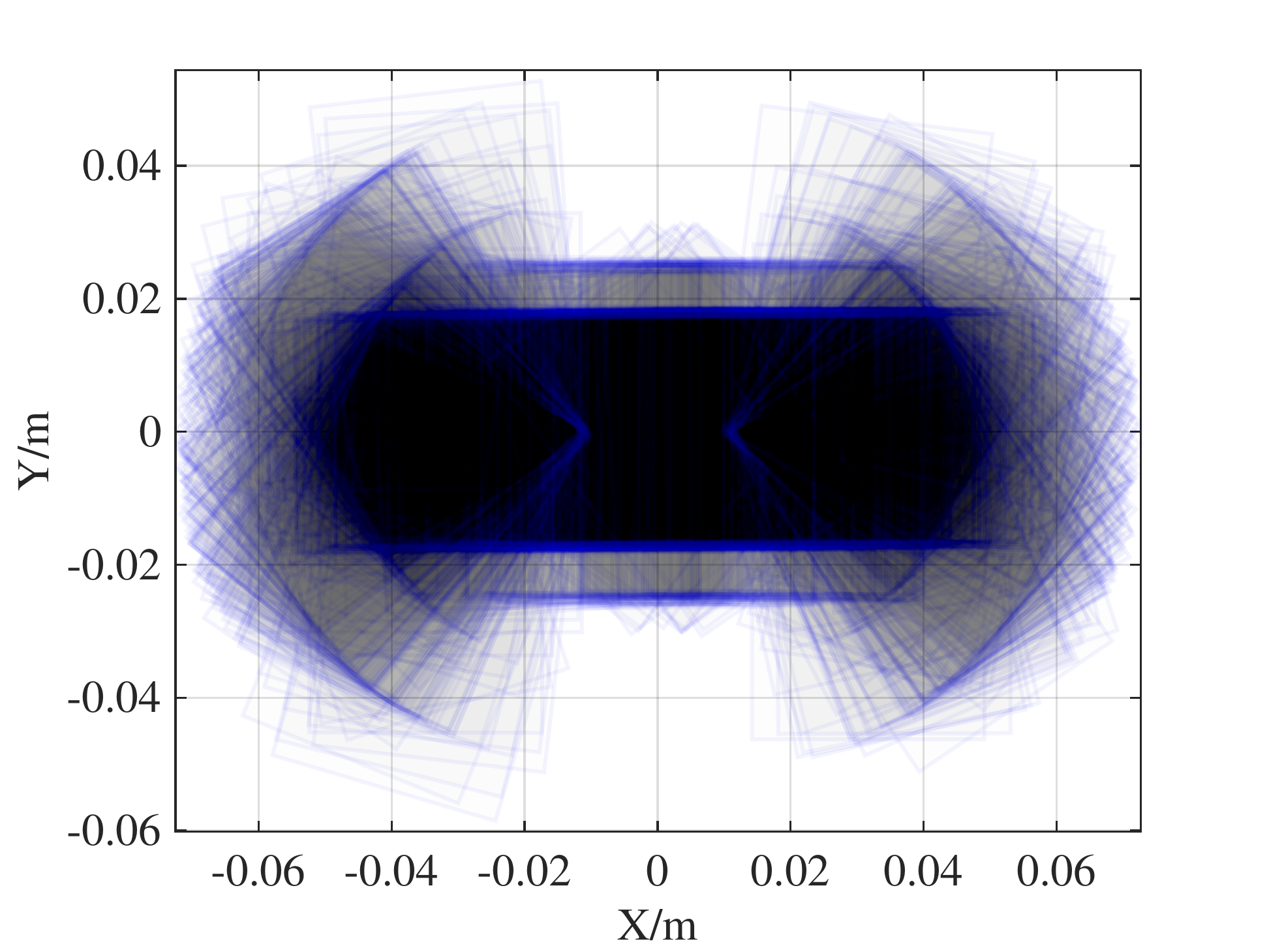} 
\setcounter{subfigure}{6}%
\caption{Distribution of the experimental post-grasp poses for
  the 6-points pressure.}
\label{fig:rect_post_grasp_6pts}
\end{subfigure}
~
\begin{subfigure}[t]{\figbarlen}
\centering
\includegraphics[width=\figbarlen]{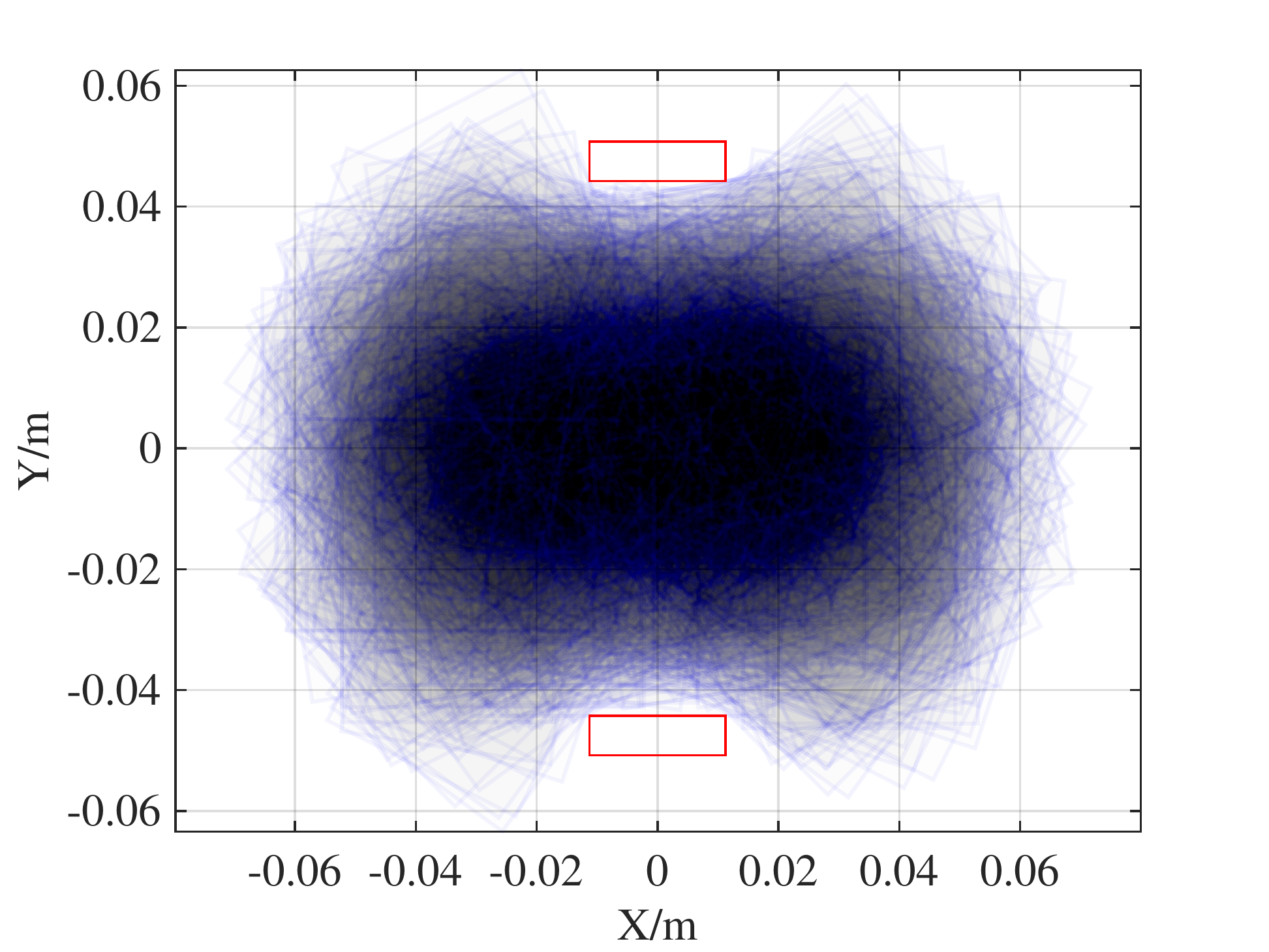} 
\setcounter{subfigure}{1}%
\caption{Initial uncertainty of 600 poses. }
\label{fig:prev_grasp_rect}
\end{subfigure}
~
\begin{subfigure}[t]{\figbarlen}
\centering
\includegraphics[width=\figbarlen]{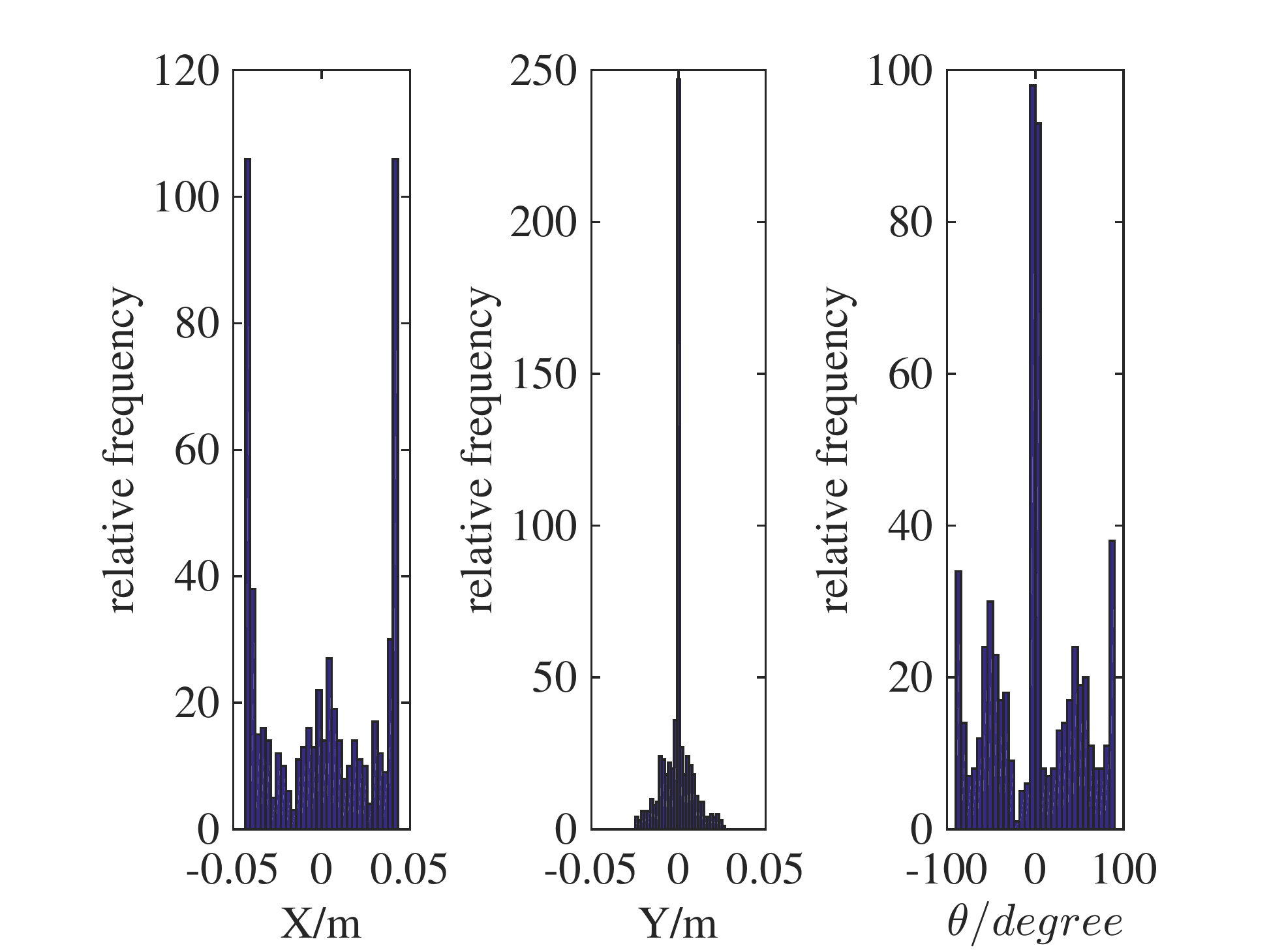}
\setcounter{subfigure}{3}%
\caption{Histogram plot of the simulated post distribution.}
\label{fig:wedge_grasp}
\end{subfigure}
~
\begin{subfigure}[t]{\figbarlen}
\centering
\includegraphics[width=\figbarlen]{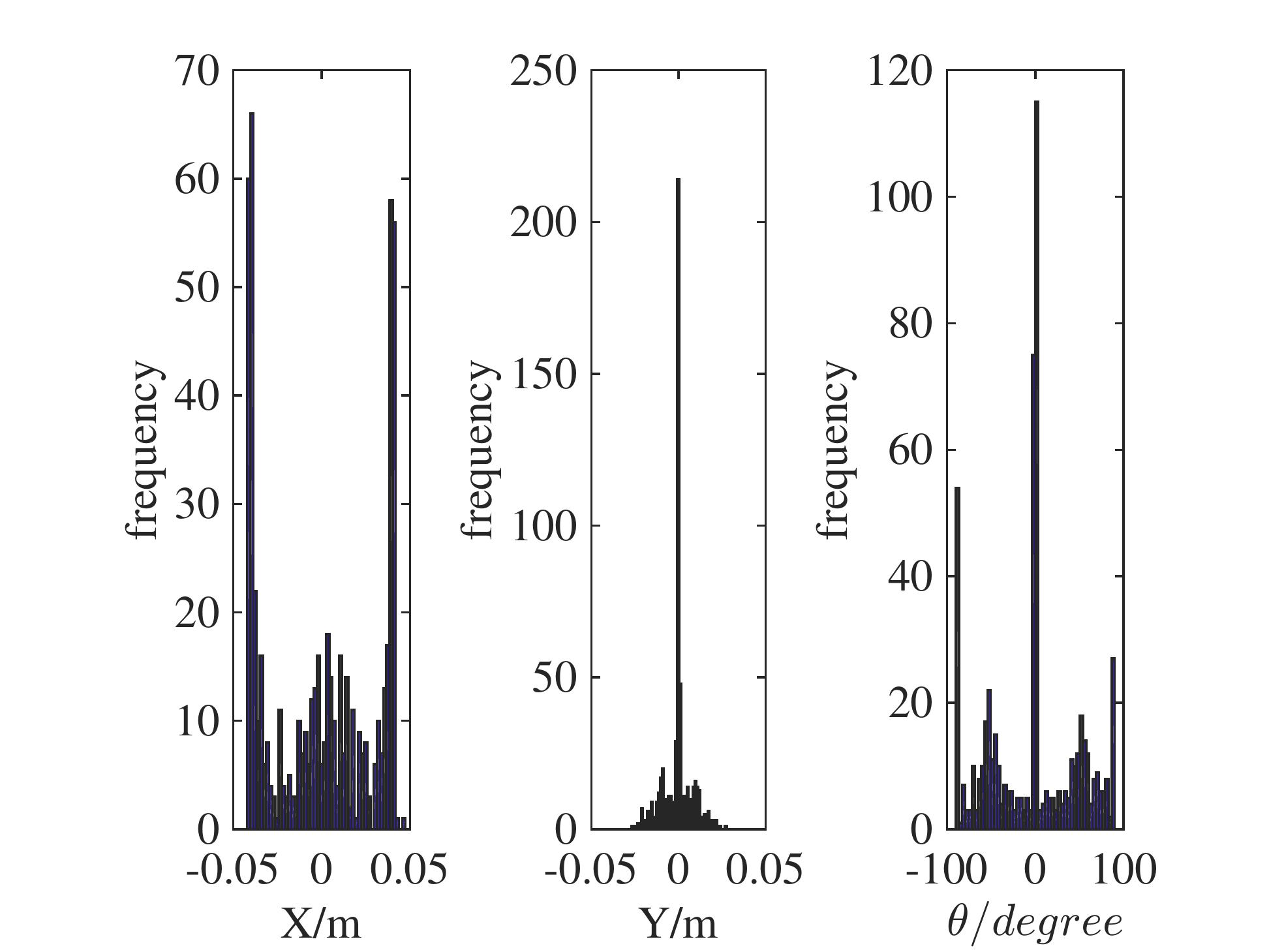}
\setcounter{subfigure}{5}%
\caption{Histogram of the experimental post distribution for the
  boundary pressure.}
\label{fig:slip_grasp}
\end{subfigure}
~
\begin{subfigure}[t]{\figbarlen}
\centering
\includegraphics[width=\figbarlen]{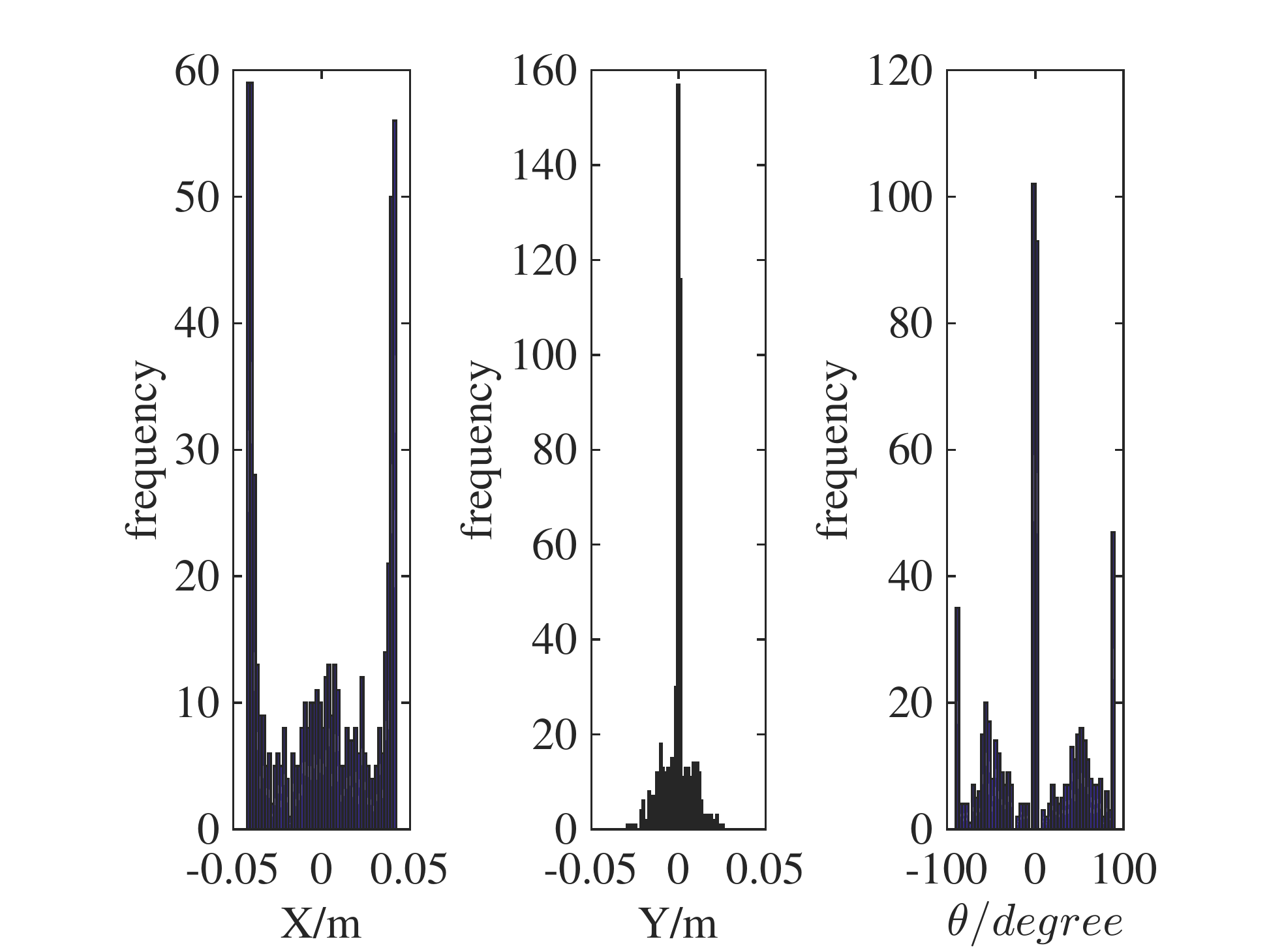}
\setcounter{subfigure}{7}%
\caption{Histogram of the experimental post distribution for the
  6-points pressure.}
\label{fig:hist_6pts}
\end{subfigure}
\caption{Experiments on the rectangular objects with different pressure
  distributions. 600 initial poses are sampled whose centers are
  uniformly distributed in a circle of radius of 20mm and angles are uniformly distributed from -90 to 90 degrees. }
\end{figure*}

\begin{figure*}[h!]
\centering
\begin{subfigure}[t]{\figbarlen}
\centering
\includegraphics[width=\figbarlen]{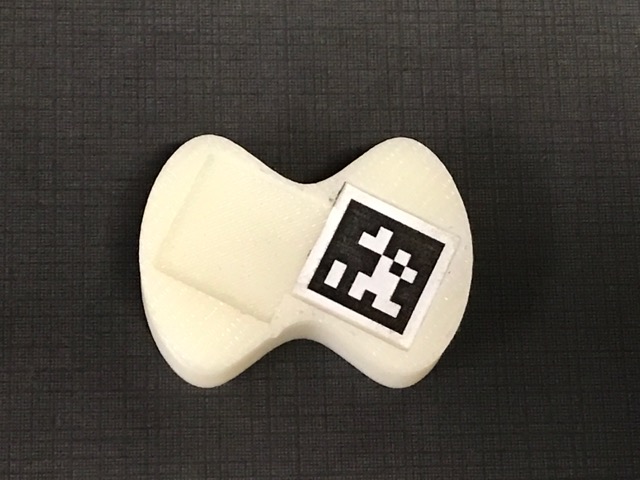} 
\setcounter{subfigure}{0}%
\caption{Butterfly shaped object with boundary pressure distribution
  used for experiment.}
\label{fig:butterfly}
\end{subfigure}
~
\begin{subfigure}[t]{\figbarlen}
\centering
\includegraphics[width=\figbarlen]{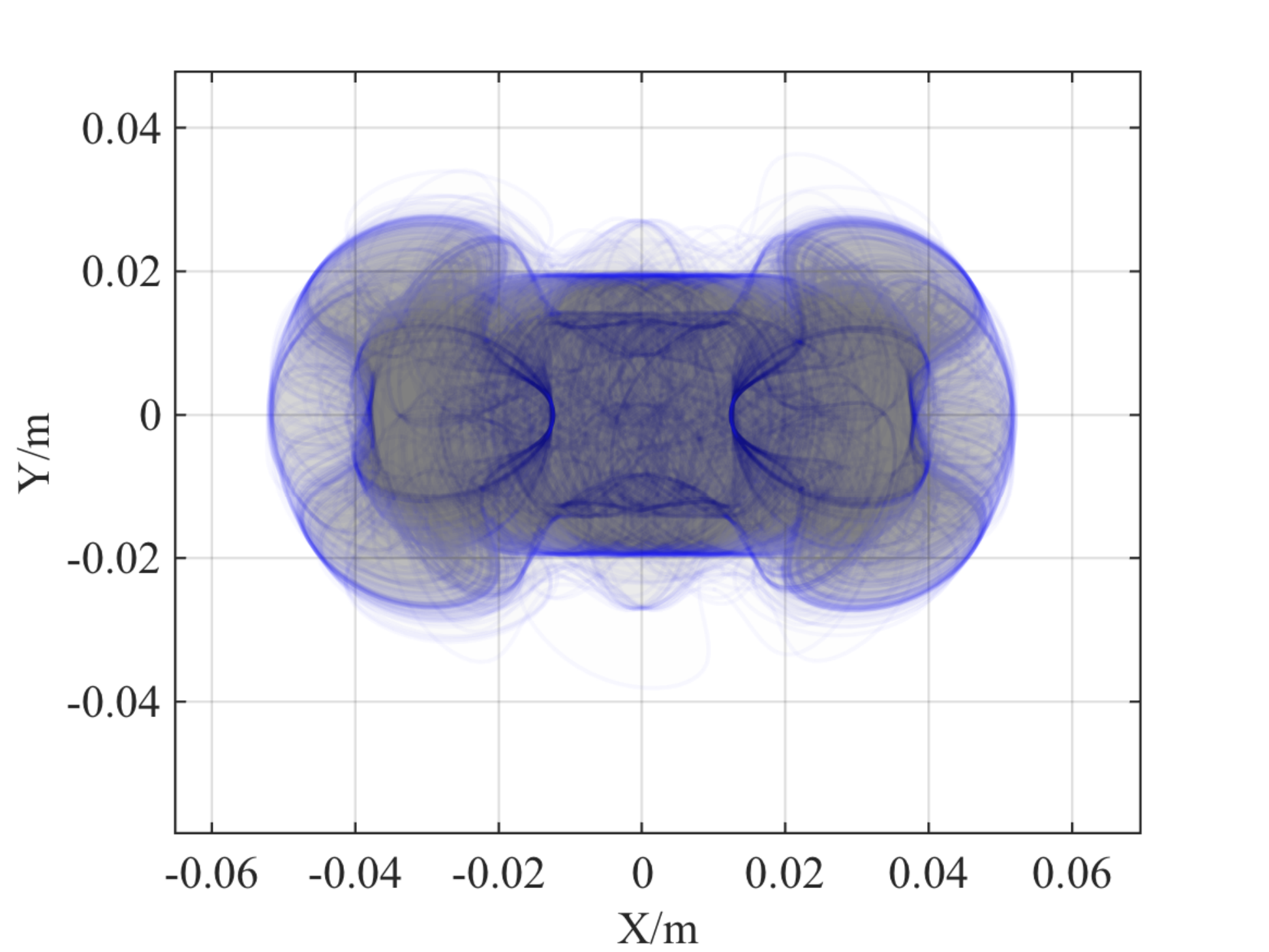} 
\setcounter{subfigure}{2}%
\caption{Distribution of the simulated post-grasp poses using the stochastic contact model.}
\label{fig:postposes_butter_sim}
\end{subfigure}
~
\begin{subfigure}[t]{\figbarlen}
\centering
\includegraphics[width=\figbarlen]{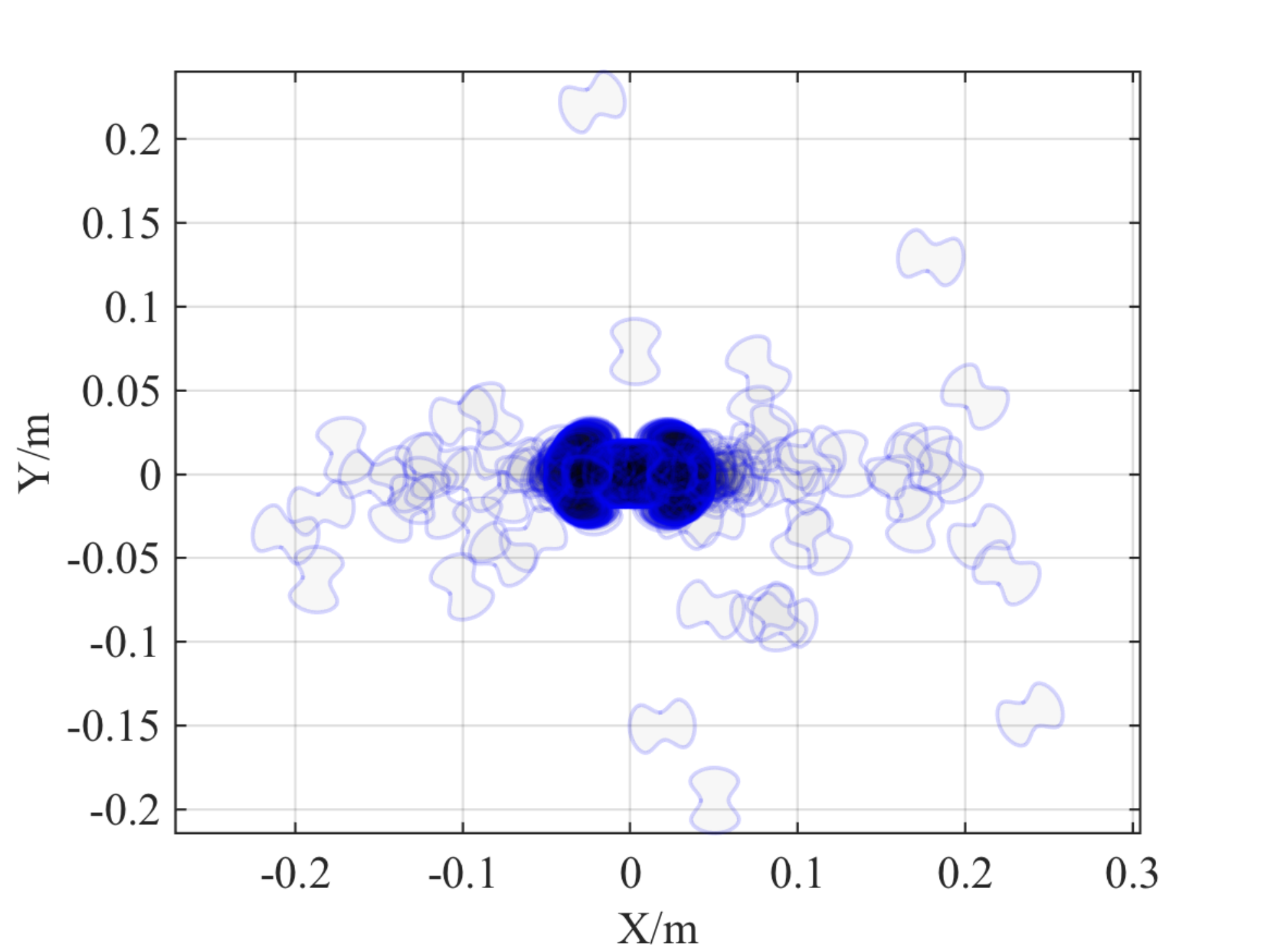} 
\setcounter{subfigure}{4}%
\caption{Distribution of the object poses after the grasping
  actions from experimental data.}
\label{fig:postposes_butter}
\end{subfigure}
~
\begin{subfigure}[t]{\figbarlen}
\centering
\includegraphics[width=\figbarlen]{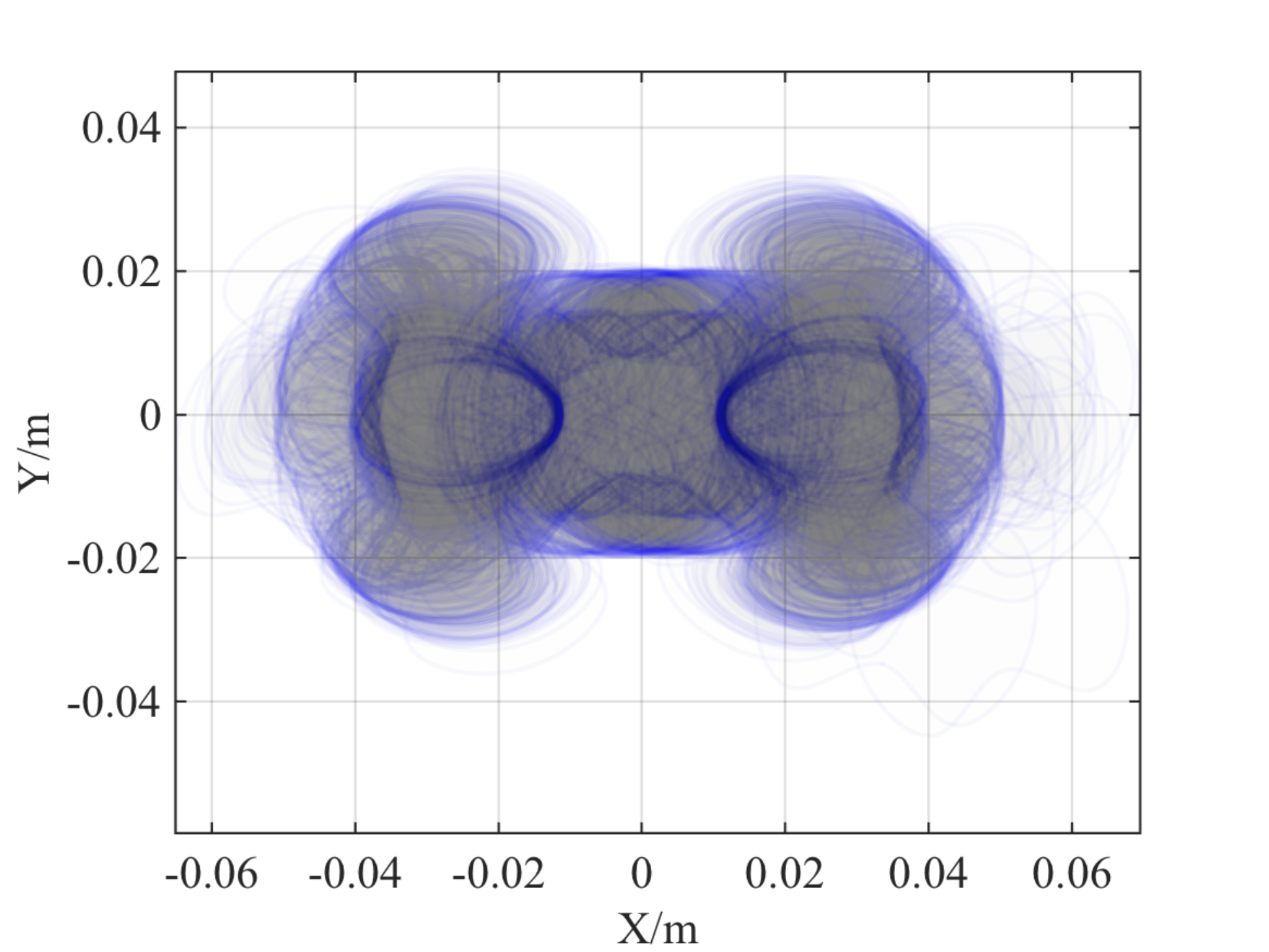} 
\setcounter{subfigure}{6}%
\caption{Zoomed-in distribution of the object
  poses after the grasping actions around the origin. }
\label{fig:postposes_zoomed_butter}
\end{subfigure}
~
\begin{subfigure}[t]{\figbarlen}
\centering
\includegraphics[width=\figbarlen]{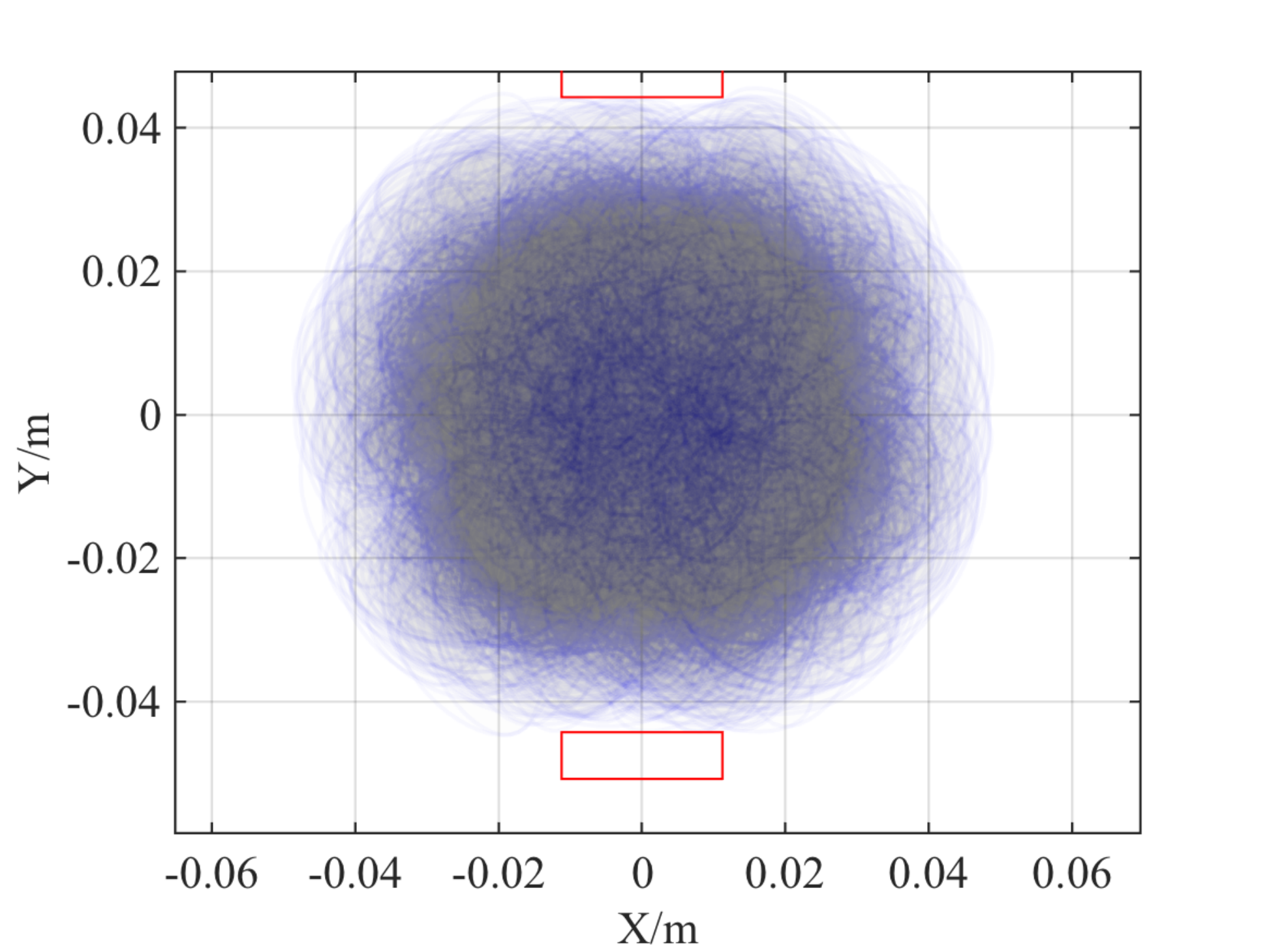} 
\setcounter{subfigure}{1}%
\caption{Initial uncertainty of 900 poses.}
\label{fig:prev_grasp_butter}
\end{subfigure}
~
\begin{subfigure}[t]{\figbarlen}
\centering
\includegraphics[width=\figbarlen]{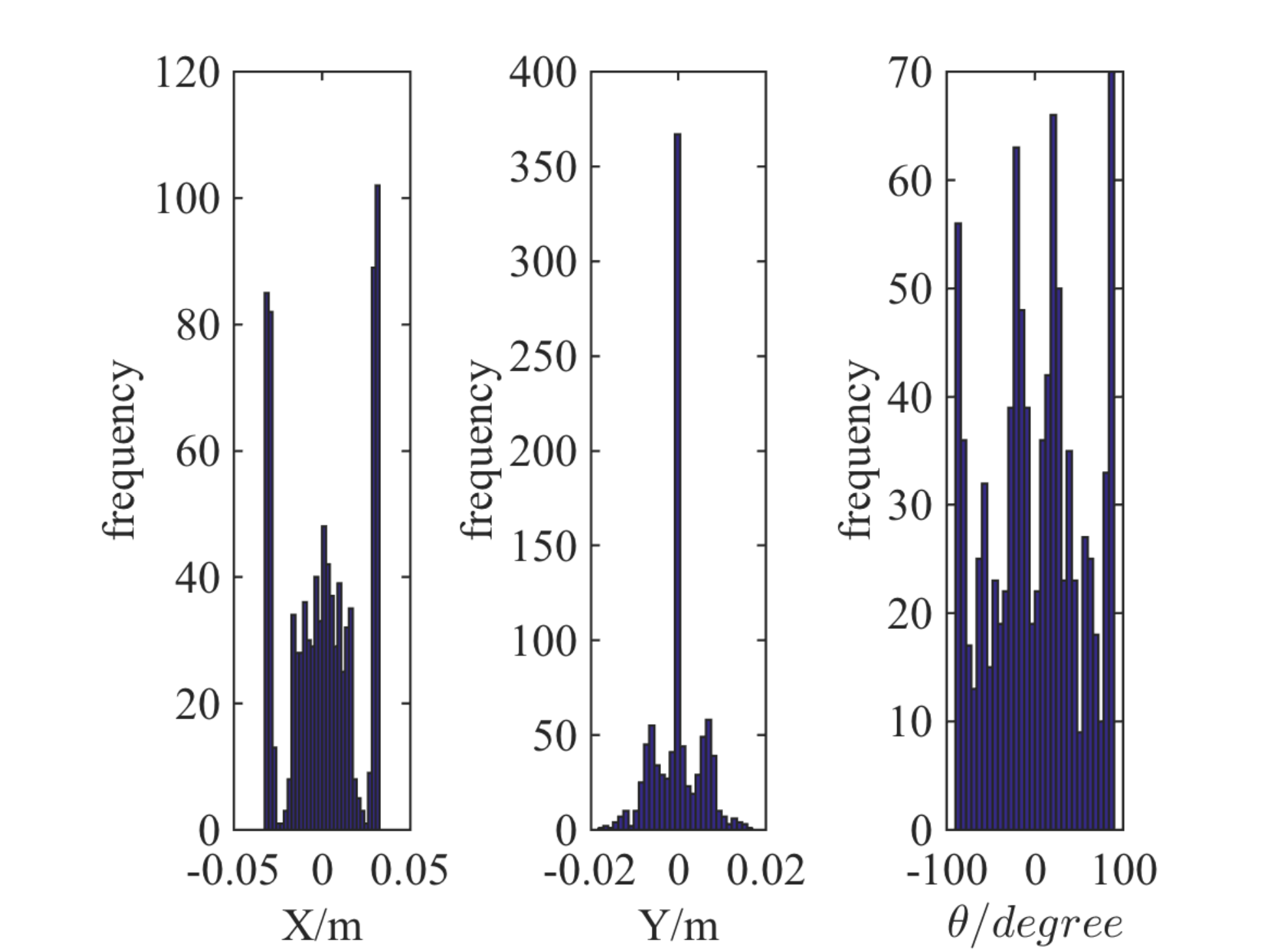}
\setcounter{subfigure}{3}%
\caption{Histogram plot for the simulated post distribution.}
\label{fig:hist_butter_sim}
\end{subfigure}
~
\begin{subfigure}[t]{\figbarlen}
\centering
\includegraphics[width=\figbarlen]{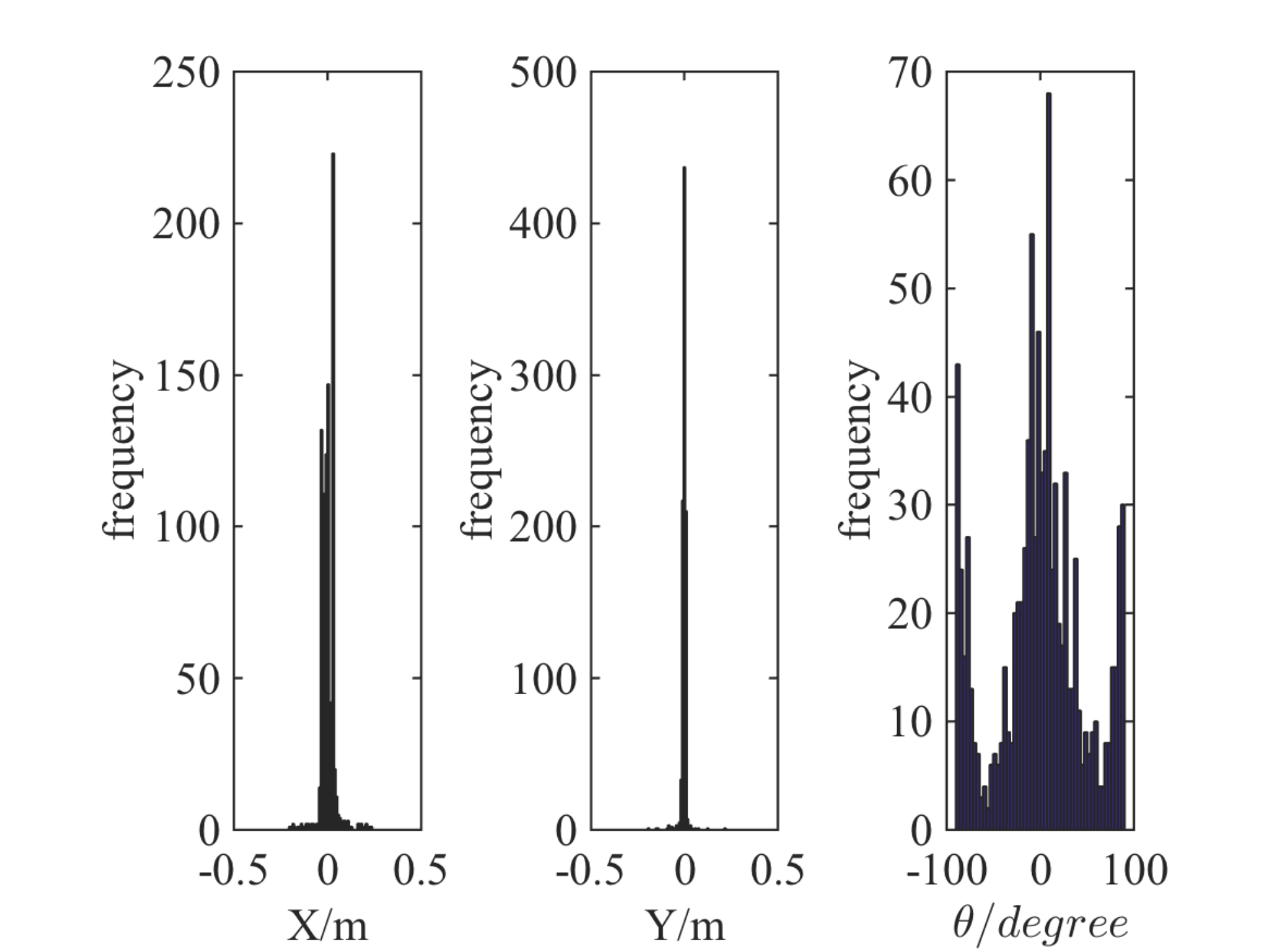}
\setcounter{subfigure}{5}%
\caption{Histogram plot of the experimental post distribution.}
\label{fig:hist_butter}
\end{subfigure}
~
\begin{subfigure}[t]{\figbarlen}
\centering
\includegraphics[width=\figbarlen]{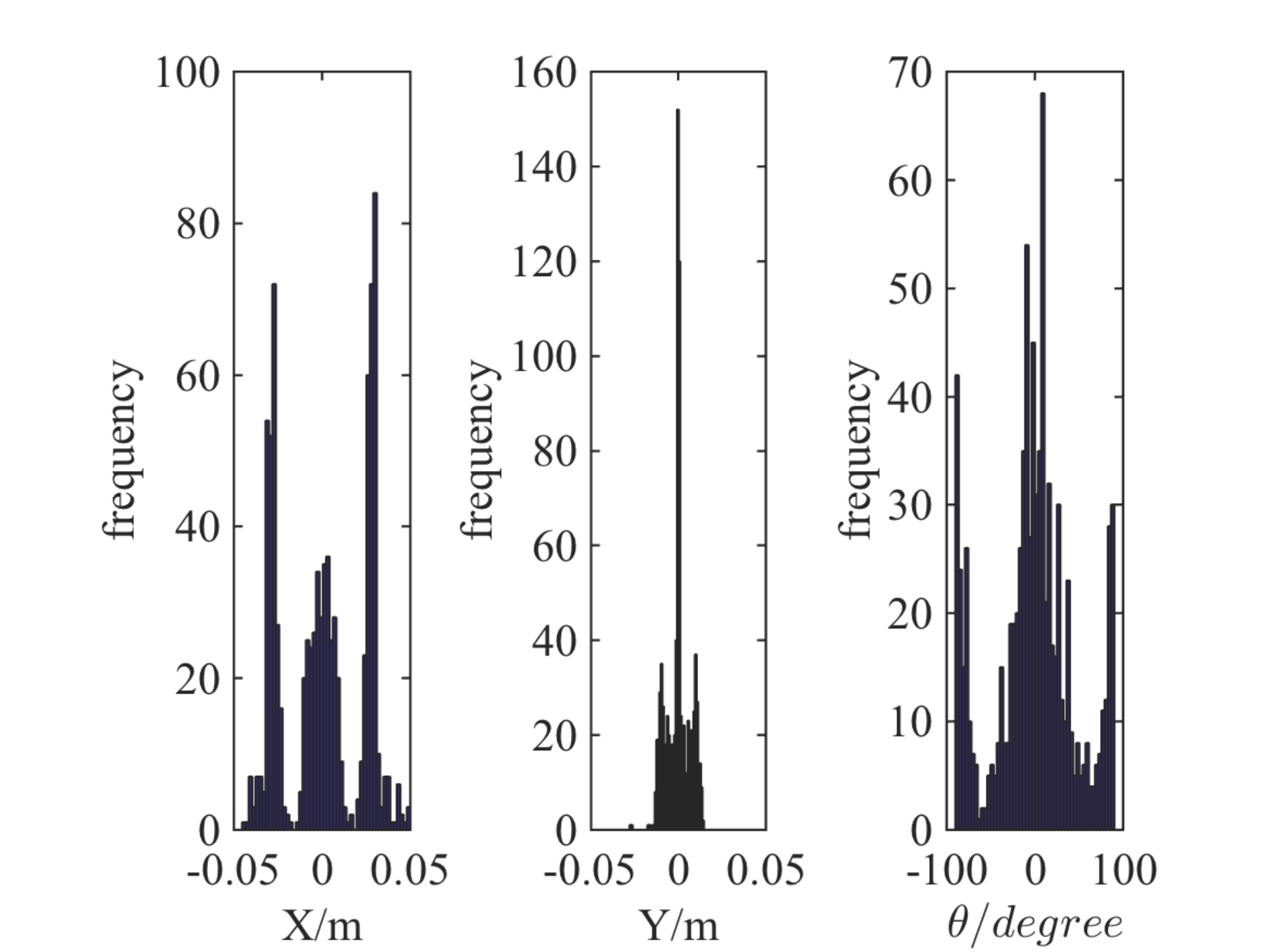}
\setcounter{subfigure}{7}%
\caption{Histogram plot of the experimental post distribution around
  the origin.}
\label{fig:hist_butter_zoom}
\end{subfigure}
\caption{Experiments on the butterfly object. The longer diameter from the convex curves is
  39mm and the shorter diameter from the concave curves is 28.6mm. 900 initial poses are
  sampled where the centers lie uniformly in a circle of radius 30mm
  and the frame angles are uniformly distributed in -90 to 90 degrees.}
\end{figure*}

\section{Conclusions and Future Work}
We extend the convex polynomial force-motion model in \cite{Zhou16}
which gives the dual mapping between friction wrench and twist to
the kinematic level where the applied controls are velocity input
(single and multiple) contacts. 
Additionally, we derive methods that enable sampling from the family
of sos-convex polynomials to model the inherent uncertainty in
frictional mechanics. The stochastic contact models are validated with
large scale robotic pushing and grasping experiments. We also see the
limitation of a first order quasistatic model in the butterfly shaped
object grasping experiment.  
Much work remains to be done. On the simulator end: 1) how to increase the accuracy without losing
convergence speed for high order polynomial based representation of
$H(\mathbf{F})$ and 2) how to handle penetration
properly when the integration step is large.  
On the application side: 1) how to quickly identify both the
mean and variance of the sampling distribution to match with
experimental data and 2) how to plan a robust sequence of grasp and push
actions for uncertainty reduction using the stochastic contact model. 

\section*{Acknowledgments}
The authors would like to thank Alberto Rodriguez and Kuan-Ting Yu for discussions and providing details of the MIT pushing dataset. This work was conducted in part through collaborative participation in the Robotics Consortium sponsored by the U.S Army Research Laboratory under the Collaborative Technology Alliance Program, Cooperative Agreement W911NF-10-2-0016 and National Science Foundation IIS-1409003. The views and conclusions contained in this document are those of the authors and should not be interpreted as representing the official policies, either expressed or implied, of the Army Research Laboratory of the U.S. Government. The U.S. Government is authorized to reproduce and distribute reprints for Government purposes notwithstanding any copyright notation herein.

\bibliographystyle{plainnat}
\bibliography{ref}

\end{document}